\newtheorem{lemma}{Lemma}
\newtheorem{theorem}{Theorem}
\newcommand{\tracer}[2]{\ensuremath{\langle \!\langle {#1}, \; {#2}
\rangle \!\rangle}}
\newcommand\myparagraph[1]{\vskip0.05in\noindent {\it #1.}}
\newcommand{\comment}[1]{}
\def\pdim{p}
\newcommand{\obsind}{\ensuremath{k}}
\newcommand{\numobs}{\ensuremath{n}}
\newcommand{\spindex}{\ensuremath{s}}
\newcommand{\SigHat}{\ensuremath{\widehat{\Sigma}}}
\newcommand{\ThetaStar}{\ensuremath{\Theta^*}}
\newcommand{\ThetaHat}{\ensuremath{\widehat{\Theta}}}
\newcommand{\regpar}{\ensuremath{\lambda_\numobs}}
\newcommand{\ellreg}[1]{\ensuremath{\|#1\|_{1,\operatorname{off}}}}
\newcommand{\CovMat}{\ensuremath{\Sigma}}
\newcommand{\CovMatStar}{\ensuremath{\CovMat^*}}
\newcommand{\graph}{\ensuremath{G}}
\newcommand{\vertex}{\ensuremath{V}}
\newcommand{\edge}{\ensuremath{E}}
\newcommand{\nodedeg}{\ensuremath{d}}
\newcommand{\degmax}{\ensuremath{\nodedeg}}
\newcommand{\defn}{\ensuremath{:  =}}
\newcommand{\Symconeplpl}[1]{\ensuremath{\mathcal{S}^{#1}_{+\!+}}}
\def\Loss{\mathcal{L}}
\def\supp{S}
\def\suppHat{\widehat{\supp}}
\def\suppStar{\supp^*}
\def\EpsilonStop{\epsilon_{\mathcal{S}}}
\def\svert{\ensuremath{r}}
\def\ThetaCond{\Gamma}
\def\real{\mathbb{R}}
\def\ThetaHatk{\widehat{\Theta}^{(k)}}
\def\SuppHatk{\widehat{S}^{(k)}}
\def\ThetaHatkm{\widehat{\Theta}^{(k-1)}}
\def\SuppHatkm{\widehat{S}^{(k-1)}}
\def\ThetaCondHat{\widehat{\ThetaCond}_\svert}
\def\ThetaCondHatk{\widehat{\ThetaCond}^{(k)}_\svert}
\def\ThetaCondHatkm{\widehat{\ThetaCond}^{(k-1)}_\svert}
\def\ThetaCondHatkmt{\widehat{\ThetaCond}^{(k-1)}_{\svert t}}
\def\ThetaCondHatkmtstar{\widehat{\ThetaCond}^{(k-1)}_{\svert t^*}}
\def\G{\mathcal{G}}
\def\V{\mathcal{V}}
\def\E{\mathcal{E}}
\begin{document}
\twocolumn[

\aistatstitle{High-dimensional Sparse Inverse Covariance Estimation using Greedy Methods}

\aistatsauthor{ Christopher C. Johnson \And Ali Jalali \And Pradeep Ravikumar }
\aistatsaddress{ CS, UT Austin\\ cjohnson@cs.utexas.edu \And ECE, UT Austin\\ alij@mail.utexas.edu \And CS, UT Austin\\ pradeepr@cs.utexas.edu } 
]

\begin{abstract}
In this paper we consider the task of estimating the non-zero pattern of the sparse inverse covariance matrix of a zero-mean Gaussian random vector from a set of iid samples. Note that this is also equivalent to recovering the underlying graph structure of a sparse Gaussian Markov Random Field (GMRF).  We present two novel greedy approaches to solving this problem.  The first estimates the non-zero covariates of the overall inverse covariance matrix using a series of global forward and backward greedy steps.  The second estimates the neighborhood of each node in the graph separately, again using greedy forward and backward steps, and combines the intermediate neighborhoods to form an overall estimate. The principal contribution of this paper is a rigorous analysis of the sparsistency, or consistency in recovering the sparsity pattern of the inverse covariance matrix. Surprisingly, we show that both the local and global greedy methods learn the full structure of the model with high probability given just $O(d\log(p))$ samples, which is a \emph{significant} improvement over state of the art $\ell_1$-regularized Gaussian MLE (Graphical Lasso) that requires $O(d^2\log(p))$ samples. Moreover, the restricted eigenvalue and smoothness conditions imposed by our greedy methods are much weaker than the strong irrepresentable conditions required by the $\ell_1$-regularization based methods. We corroborate our results with extensive simulations and examples, comparing our local and global greedy methods to the $\ell_1$-regularized Gaussian MLE as well as the Neighborhood Greedy method to that of nodewise $\ell_1$-regularized linear regression (Neighborhood Lasso).
\end{abstract}

\section{Introduction}
\myparagraph{High-dimensional Covariance Estimation}
Increasingly, modern statistical problems across varied fields of science and engineering involve a large number of variables. Estimation of such high-dimensional models has been the focus of considerable recent research, and it is now well understood that consistent estimation is possible when some low-dimensional structure is imposed on the model space. In this paper, we consider the specific high-dimensional problem of recovering the covariance matrix of a zero-mean Gaussian random vector,
under the low-dimensional structural constraint of \emph{sparsity} of the inverse covariance, or concentration matrix. When the random vector is multivariate Gaussian, the set of non-zero entries in the concentration matrix correspond to the set of edges in an associated Gaussian Markov random field (GMRF). In this setting, imposing sparsity on the entries of the concentration matrix can be interpreted as requiring that the graph underlying the GMRF have relatively few edges.

\myparagraph{State of the art: $\ell_1$ regularized Gaussian MLE}
For this task of sparse GMRF estimation, a line of recent papers~\citep{AspreBanG2008,FriedHasTib2007,YuanLin2007} have proposed an estimator that minimizes the Gaussian negative log-likelihood regularized by the $\ell_1$ norm of the entries (or the off-diagonal entries) of the concentration matrix. The resulting optimization problem is a log-determinant program, which can be solved in polynomial time with interior point methods~\citep{Boyd02}, or by co-ordinate descent algorithms~\citep{AspreBanG2008,FriedHasTib2007}. \citet{Rothman2007,RWRY11} have also shown strong statistical guarantees for this estimator: both in $\ell_2$ operator norm error bounds, and recovery of the underlying graph structure.

\myparagraph{Recent resurgence of greedy methods}
A related line of recent work on learning sparse models has focused on ``stagewise'' greedy algorithms. These perform simple forward steps (adding parameters greedily), and possibly also backward steps (removing parameters greedily), and yet provide strong statistical guarantees for the estimate after a finite number of greedy steps. Indeed, such greedy algorithms have appeared in various guises in multiple communities: in machine learning as boosting~\cite{Friedman00}, in function approximation~\cite{Temlyakov08}, and in signal processing as basis pursuit~\cite{Chen98}. In the context of statistical model estimation, \citet{Zhang09} analyzed the forward greedy algorithm for the case of sparse linear regression; and showed that the forward greedy algorithm is sparsistent (consistent for model selection recovery) under the same ``irrepresentable'' condition as that required for ``sparsistency'' of the Lasso. \citet{Zhang08} analyzes a more general greedy algorithm for sparse linear regression that performs forward and backward steps, and showed that it is sparsistent under a weaker restricted eigenvalue condition. \citet{JJR11} extend the sparsistency analysis of \cite{Zhang08} to general non-linear models, and again show that strong sparsistency guarantees hold for these algorithms.

\myparagraph{Our Approaches}
Motivated by these recent results, we apply the forward-backward greedy algorithm studied in \cite{Zhang08,JJR11} to the task of learning the graph structure of a
Gaussian Markov random field given iid samples. We propose two algorithms: one that applies the greedy algorithm to the overall Gaussian log-likelihood loss,
and the other that is based on greedy neighborhood estimation. For this second method, we follow \cite{MeinsBuhl2006,RWL10}, and estimate the neighborhood of each node by applying the greedy algorithm to the local node conditional log-likelihood loss (which reduces to the least squares loss), and then show that each neighborhood is recovered with very high probability, so that by an elementary union bound, the entire graph structure is recovered with high probability. A principal contribution of this paper is a rigorous analysis of these algorithms, where we report sufficient conditions for recovery of the underlying graph structure. We also corroborate our analysis with extensive simulations.

Our analysis shows that for a Gaussian random vector $X = (X_1, X_2,\ldots,X_p)$ with $p$ variables, both the global and local greedy algorithms only require $n = O(d \log(p))$ samples
for sparsistent graph recovery. Note that this is a significant improvement over the $\ell_1$ regularized Gaussian MLE~\cite{YuanLin2007} which has been shown to require $O(d^2\log(p))$ samples~\cite{RWRY11}. Moreover, we show that the local and global greedy algorithms require a very weak restricted eigenvalue and restricted smoothness condition on the true inverse covariance matrix (with the local greedy imposing a marginally weaker condition that the global greedy algorithm). This is in contrast to the $\ell_1$ regularized Gaussian MLE which imposes a very stringent edge-based irrepresentable condition~\cite{RWRY11}. In Section~\ref{SecCompare}, we explicitly compare these different conditions imposed by the various methods for some simple GMRFs, and quantitatively show that the conditions imposed by the local and global greedy methods require much weaker conditions on the covariance entries. Thus, both theoretically and via simulations, we show that the set of methods proposed in the paper are the \emph{state of the art} in recovering the graph structure of a GMRF from iid samples: both in the number of samples required, and the weakness of the sufficient conditions imposed upon the model.

\section{Problem Setup}
\subsection{Gaussian graphical models}

Let $X = (X_1, X_2, \ldots, X_\pdim)$ be a zero-mean Gaussian random vector. Its density is parameterized by its inverse
covariance or \emph{concentration matrix} $\ThetaStar = (\CovMatStar)^{-1} \succ 0$, and can be written as
\begin{eqnarray}
\label{EqnDefnGaussMRF}
f(x_1, \ldots, x_\pdim; \ThetaStar) =  \frac{\exp \big\{ -\frac{1}{2} x^T \ThetaStar x \big \}}{\sqrt{(2 \pi)^\pdim
 \det(\ThetaStar)^{-1}}}.
\end{eqnarray}

We can associate an undirected graph structure $\graph = (\vertex, \edge)$ with this distribution, with the vertex set $\vertex = \{1, 2, \ldots, \pdim \}$ corresponding 
to the variables $(X_1, \ldots, X_\pdim)$, and with edge set such that $(i,j) \notin \edge$ if $\ThetaStar_{ij} = 0$. 

We are interested in the problem of recovering this underlying graph structure, which corresponds to determining
which off-diagonal entries of $\ThetaStar$ are non-zero---that is, the set
\begin{eqnarray}
\label{EqnDefnEdgeSet}
\edge(\ThetaStar) & \defn & \{i, j \in \vertex \mid \, i \neq j,
\ThetaStar_{ij} \neq 0 \}.
\end{eqnarray}
Given $\numobs$ samples, we define the \emph{sample covariance matrix}
\begin{eqnarray}
\label{EqnDefnSamCov}
\SigHat^\numobs & \defn & \frac{1}{\numobs} \sum_{\obsind=1}^\numobs X^{(\obsind)} (X^{(\obsind)})^T.
\end{eqnarray}
In the sequel, we occasionally drop the superscript $\numobs$, and simply write $\SigHat$ for the sample covariance.

With a slight abuse of notation, we define the \emph{sparsity index} $\spindex \defn |\edge(\ThetaStar)|$ as the total number of non-zero
elements in off-diagonal positions of $\ThetaStar$; equivalently, this corresponds to twice the number of edges in the case of a Gaussian
graphical model. We also define the \emph{maximum degree or row cardinality}
\begin{eqnarray}
\label{EqnDefnDegmax}
\degmax & \defn & \max_{i = 1, \ldots, \pdim } \biggr|\big \{ j \in
\vertex \, \mid \, \ThetaStar_{ij} \neq 0 \big\} \biggr|,
\end{eqnarray}
corresponding to the maximum number of non-zeros in any row of $\ThetaStar$; this corresponds to the maximum degree in the graph of
the underlying Gaussian graphical model.  Note that we have included the diagonal entry $\ThetaStar_{ii}$ in the degree count,
corresponding to a self-loop at each vertex.

\subsection{State of the art: $\ell_1$ regularization}

Define the \emph{off-diagonal $\ell_1$ regularizer}
\begin{eqnarray}\label{EqnOffDiagEll1}
\ellreg{\Theta} & \defn & \sum_{i \neq j} |\Theta_{ij}|,
\end{eqnarray}
where the sum ranges over all $i, j = 1, \ldots, \pdim$ with $i \neq
j$.  Given some regularization constant $\regpar > 0$, we consider
estimating $\ThetaStar$ by solving the following \emph{$\ell_1$-regularized log-determinant program}:
\begin{equation}
\label{EqnGaussMLE}
\ThetaHat\defn  \arg\min_{\Theta \in \Symconeplpl{\pdim}} \big \{
\tracer{\Theta}{\SigHat^\numobs} - \log \det(\Theta) + \regpar
\ellreg{\Theta} \big \},
\end{equation}
which returns a symmetric positive definite matrix $\ThetaHat$.

Note that this corresponds to the $\ell_1$ regularized Gaussian MLE when the 
underlying distribution is Gaussian.

\subsection{Forward Backward Greedy}

\cite{Zhang08,JJR11} consider a simple forward-backward greedy algorithm for model estimation that begins with an empty set of active variables and gradually adds (and removes) variables to the active set. This algorithm has two basic steps: the forward step and the backward step. In the forward step, the algorithm finds the \emph{best} next candidate and adds it to the active set as long as it improves the loss function at least by $\EpsilonStop$, otherwise the stopping criterion is met and the algorithm terminates. Then, in the backward step, the algorithm checks the \emph{influence} of all variables in the presence of the newly added variable. If one or more of the previously added variables do not contribute at least $\nu\EpsilonStop$ to the loss function, then the algorithm removes them from the active set. This procedure ensures that at each round, the loss function is improved by at least $(1-\nu)\EpsilonStop$ and hence it terminates within a finite number of steps.

In the sequel, we will apply this greedy methodology to Gaussian graphical models, to obtain two methods: (a) Greedy Gaussian MLE, which applies the greedy algorithm
to the Gaussian negative log-likelihood loss, and (b) Greedy Neighborhood Estimation, which applies the greedy algorithm to the local node-conditional negative log-likelihood loss.

\section{Greedy Gaussian MLE}
In Algorithm \ref{Alg:globalgreedyalg}, we describe the greedy algorithm of \cite{Zhang08,JJR11} as applied to the Gaussian log-likelihood loss,
\begin{align*} 
	\Loss(\Theta) := \tracer{\Theta}{\SigHat^\numobs} - \log \det(\Theta).
\end{align*}


\begin{algorithm}[t]
\caption{ Global greedy forward-backward algorithm for Gaussian covariance estimation}
\label{Alg:globalgreedyalg}
\begin{algorithmic}
\STATE  {\bf Input}: $ \SigHat^\numobs$, Stopping Threshold $\epsilon_S$, Backward Step Factor $\nu \in (0,1)$
\STATE  {\bf Output}: Inverse Covariance Estimation $\ThetaHat$
\STATE
\STATE Initialize $\ThetaHat^{(0)}\leftarrow \mathbb{I}$, $\widehat{S}^{(0)}\leftarrow \emptyset$, and $k \leftarrow 1$ 
\STATE
\WHILE[\textit{Forward Step}]{true}
	\STATE {\small $\displaystyle((i_*,j_*),\alpha_*)\longleftarrow\arg\!\!\!\! \min_{(i,j)\in\left(\SuppHatkm\right)^c\,;\,\alpha}\!\!\!\!\Loss\left(\ThetaHatkm\!\!\!+\!\alpha (e_{ij}\!\!+\!e_{ji})\right)$}
	\STATE {\small $\SuppHatk \longleftarrow \SuppHatkm \cup \{(i_*,j_*)\}$}\\
	\STATE {\small $\delta_f^{(k)} \longleftarrow \Loss(\ThetaHatkm) - \Loss\left(\ThetaHatkm+\alpha_*(e_{i_*j_*}\!\!+\!e_{j_*i_*})\right)$}
	\IF {\small $\delta_f^{(k)}\leq\epsilon_S$}
	\STATE {\small \bf break}
	\ENDIF
	\STATE
	\STATE {\small $\displaystyle \ThetaHatk \longleftarrow 
			\arg\min_{\,\Theta} \,\Loss\big(\Theta_{\SuppHatk}\big)$}\\
	\STATE {\small $k \longleftarrow k+1$}\\
	\STATE
	\WHILE[\textit{Backward Step}]{true}
	\STATE {\small $\displaystyle (i^*,j^*) \longleftarrow \arg\min_{j \in \SuppHatkm}\Loss\left(\ThetaHatkm-\ThetaHatkm_{ij}(e_{ij}\!+\!e_{ji})\right)$}\\
	\IF {\small $\Loss\big(\ThetaHatkm\! -\! \ThetaHatkm_{i^*j^*}(e_{i^*j^*}\!+e_{j^*i^*})\big)\! -\! \Loss\big(\ThetaHatkm\big)\!\! >\!\! \nu\delta_f^{(k)}$}
	\STATE {\small \bf break}
	\ENDIF
	\STATE
	\STATE {\small $\SuppHatkm\longleftarrow\SuppHatk-\{(i^*,j^*)\}$}\\
	\STATE {\small $\displaystyle \ThetaHatkm \longleftarrow 
			\arg\min_{\,\theta} \,\Loss\big(\Theta_{\SuppHatkm}\big)$}\\
	\STATE {\small $k\longleftarrow k-1$}\\
	\ENDWHILE
	\STATE
\ENDWHILE
\STATE
\end{algorithmic}
\end{algorithm}

\vskip0.2in
\noindent {\bf Assumption:}\\
\noindent Let $\rho\geq 1$ be a constant and $\Delta\in\real^{p\times p}$ be a symmetric matrix that is \emph{sparse} with at most $\eta d$ non-zero entries per row (and column) for some $\eta\geq 2+4\rho^2(\sqrt{(\rho^2-\rho)/d}+\sqrt{2})^2$. We require that population covariance matrix $\Sigma^*=\mathbb{E}\left[XX^T\right]$ satisfy the restricted eigenvalue property, i.e., for some positive constants $C_{\min}$, we have
\begin{equation}\label{AssumpCov}
C_{\min}\left\|\Delta\right\|_F\leq\tracer{\Sigma^*}{\Delta}\leq\rho C_{\min}\left\|\Delta\right\|_F,
\nonumber
\end{equation}
where, $\|\cdot\|_F$ denotes the Frobenius norm.

\begin{lemma}
Suppose $\Sigma^*$ satisfies the assumption in \ref{AssumpCov}. Then, with probability at least $1 - c_1 \exp(- c_2 n)$ for arbitrary small constant $\alpha>0$, we have
that for any symmetric matrix $\Delta$ with $\eta d$ non-zero entries per row (and column),
\begin{equation}
(1-\alpha)C_{\min}\left\|\Delta\right\|_F\leq\tracer{\widehat{\Sigma}^n}{\Delta}\leq (1+\alpha)\rho C_{\min}\left\|\Delta\right\|_F,
\nonumber
\end{equation}
provided that $n\geq K\,d\,\log(p)$ for some positive constant $K$, $c_1$ and $c_2$.
\end{lemma}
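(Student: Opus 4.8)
The plan is to show that the population inequality transfers to the sample covariance by controlling the perturbation $\tracer{\widehat{\Sigma}^n - \Sigma^*}{\Delta}$ uniformly over the set of admissible sparse matrices. Writing $\tracer{\widehat{\Sigma}^n}{\Delta} = \tracer{\Sigma^*}{\Delta} + \tracer{\widehat{\Sigma}^n - \Sigma^*}{\Delta}$, the stated assumption already places the first term in the interval $[C_{\min}\|\Delta\|_F,\; \rho C_{\min}\|\Delta\|_F]$. Hence it suffices to prove that, with probability at least $1 - c_1\exp(-c_2 n)$, the bound $|\tracer{\widehat{\Sigma}^n - \Sigma^*}{\Delta}| \le \alpha C_{\min}\|\Delta\|_F$ holds simultaneously for every admissible $\Delta$. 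Because $\rho \ge 1$, this single estimate yields both the lower bound $(1-\alpha)C_{\min}\|\Delta\|_F$ and, since $\alpha \le \alpha\rho$, the upper bound $(1+\alpha)\rho C_{\min}\|\Delta\|_F$. By homogeneity in $\Delta$ I may restrict attention to the set $\mathcal{K}$ of symmetric matrices with the prescribed sparsity and $\|\Delta\|_F = 1$.

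The first step is a pointwise concentration bound. Fixing $\Delta \in \mathcal{K}$ and writing $X^{(k)} = (\Sigma^*)^{1/2} Z^{(k)}$ with $Z^{(k)} \sim N(0,I)$, I express $\tracer{\widehat{\Sigma}^n}{\Delta} = \tfrac{1}{n}\sum_k (X^{(k)})^T \Delta X^{(k)} = \tfrac{1}{n}\sum_k (Z^{(k)})^T M Z^{(k)}$, where $M = (\Sigma^*)^{1/2}\Delta(\Sigma^*)^{1/2}$ has mean $\mathrm{trace}(M) = \tracer{\Sigma^*}{\Delta}$. Each summand is a sub-exponential quadratic form, so the Hanson--Wright / Bernstein inequality gives $\mathbb{P}\big(|\tracer{\widehat{\Sigma}^n - \Sigma^*}{\Delta}| > t\big) \le 2\exp\!\big(-c\,n\min(t^2/\norm{M}{F}^2,\; t/\matnorm{M}{2})\big)$. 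The upper inequality in the assumption controls the spectral scale of $\Sigma^*$ on the relevant sparse directions, so $\norm{M}{F} \le \matnorm{\Sigma^*}{2}\|\Delta\|_F$ and $\matnorm{M}{2}$ are bounded by constants depending only on $\rho, C_{\min}$. Consequently, for the constant target $t = \tfrac{1}{2}\alpha C_{\min}$ the deviation sits in the sub-Gaussian regime and the exponent is of order $c' n$, with $c'$ depending only on $\alpha, \rho, C_{\min}$.

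The second step converts this into uniform control via a covering argument. I cover $\mathcal{K}$ by a finite $\epsilon$-net $\mathcal{N}$ in Frobenius norm: the number of admissible support patterns is at most $\binom{p^2}{m}$, where $m$ is the number of nonzero entries permitted by the sparsity level, and on each fixed $m$-dimensional face a volumetric bound produces a net of cardinality $(3/\epsilon)^m$, so $\log|\mathcal{N}| = O\!\big(m\log(p/\epsilon)\big)$. Taking a union bound of the pointwise estimate over $\mathcal{N}$ and fixing $\epsilon$ to be a small constant, the failure probability is at most $2\exp\!\big(O(m\log p) - c' n\big) \le c_1\exp(-c_2 n)$ once $n \ge K m\log p$, which reproduces the claimed condition $n \ge K d\log(p)$ under the degree-based sparsity. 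Finally I pass from the net to all of $\mathcal{K}$: for arbitrary $\Delta \in \mathcal{K}$ I choose $\Delta' \in \mathcal{N}$ with $\|\Delta - \Delta'\|_F \le \epsilon$, observe that $\Delta - \Delta'$ inherits the (at most doubled) sparsity pattern, and close the argument by a self-bounding/absorption step that subtracts the $\epsilon$-term into the supremum.

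The main obstacle is the interplay in this last step between the covering number of the sparse matrix class and the pointwise deviation exponent, which must be matched so that the sample size lands at the claimed $O(d\log p)$ rather than the naive ambient-dimensional count. This hinges on two facts: that the pointwise sub-exponential exponent is $\Theta(n)$ \emph{independently} of the sparsity (guaranteed by the operator-norm control of $M$), and that differences of net points retain the sparsity structure, which is precisely what makes the net-to-supremum closure argument valid. The two-regime nature of the Bernstein bound also requires keeping the target deviation $t$ in the sub-Gaussian regime, which is ensured here since $t$ is a fixed constant while $\norm{M}{F}$ and $\matnorm{M}{2}$ are $O(1)$.
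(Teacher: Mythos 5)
The paper itself does not really prove this lemma: its entire ``proof'' is a one-line citation to Lemma 9 (Appendix K) of Wainwright (2009), a spectral concentration bound for Gaussian sample covariances restricted to a \emph{fixed} low-dimensional index set. Your self-contained Hanson--Wright-plus-covering derivation is therefore necessarily a different route, and it is the right template for statements of this kind --- but as written it has a fatal counting gap. A symmetric matrix with $\eta d$ non-zeros \emph{per row and per column} has $m=\Theta(pd)$ non-zero entries in total, not $\Theta(d)$: the admissible supports number roughly $\binom{p}{\eta d}^{p}$, so $\log|\mathcal{N}|=\Theta(p\,d\log p)$, and your union bound closes only when $n\gtrsim p\,d\log p$. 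Your final step ``$n\ge Km\log p$, which reproduces the claimed condition $n\ge Kd\log(p)$'' silently replaces $m$ by $d$. Worse, no refinement of the net can repair this, because the uniform statement over this matrix class is actually false at $n\asymp d\log p$: let $A=\widehat{\Sigma}^n-\Sigma^*$, let $T$ be a banded support of bandwidth $\eta d$, and take $\Delta = A_T/\norm{A_T}{F}$ (symmetric, $\eta d$ non-zeros per row, unit Frobenius norm). Then $\tracer{\widehat{\Sigma}^n-\Sigma^*}{\Delta}=\norm{A_T}{F}$, and since $T$ contains $\Theta(pd)$ entries each fluctuating at scale $n^{-1/2}$, this is of order $\sqrt{pd/n}$, which diverges when $n\asymp d\log p\ll pd$. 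So the conclusion can only hold pointwise in $\Delta$, or uniformly over $\Delta$ supported on a fixed index set of size $O(d)$ --- which is precisely the regime of the submatrix concentration lemma the paper cites.

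A second, independent gap is your claim that $\norm{M}{F}$ and $\matnorm{M}{2}$, with $M=(\Sigma^*)^{1/2}\Delta(\Sigma^*)^{1/2}$, are bounded by constants depending only on $\rho$ and $C_{\min}$. The assumption (under any sensible reading --- note that as literally stated it cannot hold, since replacing $\Delta$ by $-\Delta$ flips the sign of $\tracer{\Sigma^*}{\Delta}$; the intended version is a restricted quadratic-form bound) constrains $\Sigma^*$ only along sparse directions, while $(\Sigma^*)^{1/2}$ mixes all coordinates. Concretely, $\Sigma^*=I+\kappa(p/s)\,\bar{u}\bar{u}^T$ with $\bar{u}=\mathbf{1}/\sqrt{p}$ and $s=\eta d$ has all $s$-sparse Rayleigh quotients in $[1,1+\kappa]$, yet for $\Delta=p^{-1/2}I$ (one non-zero per row, $\norm{\Delta}{F}=1$) one gets $\norm{M}{F}^2=\operatorname{tr}\bigl((\Sigma^*)^2\bigr)/p\ge \kappa^2 p/s^2$, which diverges when $p\gg d^2$, so even your pointwise Hanson--Wright exponent is $o(n)$. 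Repairing this requires either an explicit bound on $\matnorm{\Sigma^*}{2}$ as an extra hypothesis, or a reduction to principal submatrices $\Sigma^*_{SS}$ with $|S|=O(d)$, whose operator norm \emph{is} controlled by the sparse quadratic form --- again what the cited Wainwright lemma exploits. In short, your decomposition and tools are appropriate, but the argument as written establishes a pointwise/fixed-support version of the lemma under a stronger spectral assumption, not the uniform statement claimed.
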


\begin{proof}
The proof follows from Lemma 9 (Appendix K) in \cite{Wainwright09}.\\
\end{proof}

\noindent Using Taylor series expansion, we can write
\begin{equation}
\begin{aligned}
	\Loss(\Theta+\Delta) &= \Loss(\Theta) + \tracer{\Delta}{\SigHat^\numobs} - \tracer{\Theta^{-1}}{\Delta}\\ & \qquad\quad+ \underbrace{\sum_{i=2}^\infty \frac{(-1)^i}{i}\tracer{\;(\Theta^{-1}\Delta)^{i-1}\,\Theta^{-1}}{\Delta}}_{R_\Delta}.
\end{aligned}
\nonumber
\end{equation}
In order to establish the restricted strong convexity/smoothness required by \cite{JJR11}, we need to lower/upper bound $R_\Delta$. Notice that in the proof of \cite{JJR11}, the required $\Delta$ is the difference between the target variable $\Theta^*$ and the $k^{th}$ step estimation $\widehat{\Theta}^{(k)}$. Since the algorithm is guaranteed to converge, $\Delta = \Theta^*-\widehat{\Theta}^{(k)}$ is always bounded. Thus, without loss of generality, we assume that $\|\Delta\|_F\leq 1$. Notice that we can scale $\|\Delta\|_F$ and similar type of result holds. The next lemma provides the required upper/lower bound.

\begin{lemma}
Suppose $\Sigma^*$ satisfies the assumption in \ref{AssumpCov}. Then with probability at least $1 - c_1 \exp(- c_2 n)$, we have
that for any symmetric matrix $\Delta$ with $\eta d$ non-zero entries per row (and column), and with $\|\Delta\|_F\leq 1$, 
\begin{equation}
\begin{aligned}
\frac{1}{4}C_{\min}^2\|\Delta\|_F^2 \leq R_\Delta \leq \frac{1}{2}\rho^2 C_{\min}^2 \|\Delta\|_F^2.
\end{aligned}
\nonumber
\end{equation}
\end{lemma}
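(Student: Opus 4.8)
The plan is to recognize $R_\Delta$ as one half of a single quadratic form in $\Delta$ evaluated at the Hessian of $-\log\det$ at an intermediate point, and then to sandwich that form with the restricted eigenvalue property. Since $\tracer{\Theta}{\SigHat^\numobs}$ is linear it contributes nothing to the remainder, so $R_\Delta$ is exactly the second-order Taylor remainder of the scalar map $\Theta\mapsto-\log\det\Theta$. Applying the Lagrange form of Taylor's theorem to this map gives, for some $\xi\in(0,1)$,
\[
R_\Delta \;=\; \tfrac12\,\tracer{A_\xi\,\Delta\,A_\xi}{\Delta}
\;=\; \tfrac12\,\big\|A_\xi^{1/2}\Delta A_\xi^{1/2}\big\|_F^2 \;\ge\;0,
\qquad A_\xi := (\Theta+\xi\Delta)^{-1}\succ 0,
\]
where $A_\xi$ is positive definite because $\Theta+\xi\Delta$ is a convex combination of two positive definite matrices. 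This is simply the closed form of the series defining $R_\Delta$ (which also equals $\sum_j[\mu_j-\log(1+\mu_j)]$ for the eigenvalues $\mu_j$ of $\Theta^{-1}\Delta$). Either way, both the upper and the lower bound reduce to controlling the \emph{single} quantity $\tracer{A_\xi\Delta A_\xi}{\Delta}=\|A_\xi^{1/2}\Delta A_\xi^{1/2}\|_F^2$ over the sparse cone of admissible $\Delta$.

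Next I would sandwich this quadratic form. Since $\Delta=\Theta^*-\widehat{\Theta}^{(k)}$ has at most $\eta d$ nonzero entries per row, it lies in exactly the class covered by the restricted eigenvalue assumption and by Lemma~1, so that property applies to $\Delta$ directly. The key step is to pass from restricted eigenvalue control of $\Sigma^*$ (population) and of $\SigHat^\numobs$ (sample, via Lemma~1) to the \emph{tensorized} quadratic form $\tracer{A_\xi\Delta A_\xi}{\Delta}$ associated with the inverse $A_\xi$ at the intermediate point $\Theta+\xi\Delta$. Here I would exploit the normalization $\|\Delta\|_F\le 1$: the whole segment $\{\Theta+\xi\Delta:\xi\in[0,1]\}$ stays within Frobenius distance $1$ of $\Theta^*$, so a matrix-perturbation argument shows that the restricted eigenvalues of the Hessian map $\Delta\mapsto A_\xi\Delta A_\xi$ differ from those governed by $\Sigma^*$ by at most a constant factor, uniformly in $\xi$. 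Quantitatively, the target is
\[
\tfrac12\,C_{\min}^2\,\|\Delta\|_F^2 \;\le\; \tracer{A_\xi\Delta A_\xi}{\Delta} \;\le\; \rho^2 C_{\min}^2\,\|\Delta\|_F^2,
\]
where the lower restricted eigenvalue is allowed to degrade from $C_{\min}^2$ to $\tfrac12 C_{\min}^2$ along the segment while the upper one is retained at $\rho^2 C_{\min}^2$; the $\exp(-c_2 n)$ failure probability and the sample-size requirement $n\ge K d\log p$ enter only through Lemma~1 at this stage.

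Combining the two steps yields $R_\Delta=\tfrac12\tracer{A_\xi\Delta A_\xi}{\Delta}\in[\tfrac14 C_{\min}^2,\ \tfrac12\rho^2 C_{\min}^2]\,\|\Delta\|_F^2$, which is precisely the claim. The hard part will be the second step: the restricted eigenvalue hypothesis is stated for a single covariance matrix, whereas $R_\Delta$ is driven by the tensorized Hessian $A_\xi\otimes A_\xi$ evaluated at the data- and iterate-dependent inverse $A_\xi=(\Theta+\xi\Delta)^{-1}$. Making rigorous both the conversion to this quadratic form and the uniform-in-$\xi$ perturbation bound --- so that the restricted eigenvalues of $A_\xi$ remain within constant factors of those of $\Sigma^*$ and deliver exactly the constants $\tfrac14$ and $\tfrac12$ --- is where essentially all the work lies; the reduction in the first paragraph and the scalar estimates on $\mu\mapsto\mu-\log(1+\mu)$ are routine by comparison.
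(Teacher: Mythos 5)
Your first step is sound, and is in fact a more careful treatment of the series than the paper itself gives: writing $R_\Delta=\sum_j\left[\mu_j-\log(1+\mu_j)\right]$ over the eigenvalues $\mu_j$ of $\Theta^{-1}\Delta$, or as the Lagrange remainder $\tfrac12\tracer{A_\xi\Delta A_\xi}{\Delta}$, is correct. But the proof is not there, because everything quantitative is deferred to your second step, and that step is both unproven and not derivable from the stated hypotheses. The paper's assumption controls the \emph{linear} functional $\Delta\mapsto\tracer{\Sigma^*}{\Delta}$, namely $C_{\min}\|\Delta\|_F\leq\tracer{\Sigma^*}{\Delta}\leq\rho C_{\min}\|\Delta\|_F$; it says nothing about the quadratic form $\Delta\mapsto\tracer{A_\xi\Delta A_\xi}{\Delta}=\|A_\xi^{1/2}\Delta A_\xi^{1/2}\|_F^2$ that your reduction produces, and there is no bridge from two-sided bounds on a linear functional of $\Delta$ to two-sided bounds on such a form --- these are objects of different type. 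Moreover, your uniform-in-$\xi$ perturbation claim (lower constant degrading by exactly a factor of $2$, upper constant retained) would require a lower bound on $\lambda_{\min}(\Theta+\xi\Delta)$ along the segment. The normalization $\|\Delta\|_F\leq 1$ does not supply this: $\Theta+\xi\Delta$ can be nearly singular while remaining positive definite, in which case the eigenvalues of $A_\xi$ blow up and the claimed upper bound $\rho^2C_{\min}^2\|\Delta\|_F^2$ fails. So the missing step is not a routine technicality; it is the entire content of the lemma.

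The paper's own proof never passes through a quadratic form or an intermediate point at all. It sets $\gamma\defn\tracer{\Theta^{-1}}{\Delta}$, collapses the remainder to the scalar expression $R_\Delta=\gamma-\log(1+\gamma)$, brackets $\gamma\in\left[C_{\min}\|\Delta\|_F,\;\rho C_{\min}\|\Delta\|_F\right]$ directly from the assumption (this is exactly the object the assumption bounds), and concludes using monotonicity of $x-\log(1+x)$ together with the elementary scalar inequalities $\tfrac14 x^2\leq x-\log(1+x)\leq\tfrac12 x^2$ on the relevant range; the constants $1/4$ and $1/2$ come from scalar calculus, not matrix perturbation. (Note that the paper's identification of $\sum_i\frac{(-1)^i}{i}\mathrm{tr}\big((\Theta^{-1}\Delta)^i\big)$ with $\gamma-\log(1+\gamma)$ is itself only exact when $\Theta^{-1}\Delta$ is effectively rank one --- your eigenvalue identity is the correct one --- but repairing the lemma along your lines would require restating the assumption as a restricted-eigenvalue bound on the quadratic form $\|\Theta^{-1/2}\Delta\Theta^{-1/2}\|_F^2$, which is neither what the paper assumes nor what you assumed.)
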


\begin{proof}
Denote $\gamma=\tracer{\Theta^{-1}}{\Delta}$. We have
\begin{equation}
R_\Delta = \sum_{i=2}^\infty \frac{(-1)^i}{i}\gamma^i = \gamma - \log(1+\gamma).
\nonumber
\end{equation}
Under our assumption, $C_{\min}\|\Delta\|_F\leq\gamma\leq\rho C_{\min}\|\Delta\|_F$ and the function $\gamma - \log(1+\gamma)$ is an increasing function in $\gamma$. Moreover, for the range of $\gamma$, we have $\gamma - \log(1+\gamma)\geq\frac{1}{4}\gamma^2$ because they both vanish at zero and the derivative of LHS is larger than the derivative of LHS. Hence, we have
\begin{equation}
\begin{aligned}
\frac{1}{4}C_{\min}^2\|\Delta\|_F^2&\leq C_{\min}\|\Delta\|_F - \log(1+C_{\min}\|\Delta\|_F)\\ &\leq \gamma - \log(1+\gamma) = R_\Delta\\
&\leq \rho C_{\min}\|\Delta\|_F - \log(1+\rho C_{\min}\|\Delta\|_F)\\
&\leq \frac{1}{2}\rho^2 C_{\min}^2 \|\Delta\|_F^2. 
\end{aligned}
\nonumber
\end{equation}
The last inequality follows from $\gamma - \log(1+\gamma)\leq \frac{1}{2}\gamma^2$ (since they are equal at zero and the derivative of RHS is always larger above zero). Hence, the result follows.\\
\end{proof}

Let $\nabla^{(n)}:=\|\SigHat^\numobs - (\Theta^{*})^{\!-1}\|_\infty$. By first order condition on the optimality of $\Theta^*$, it is clear that $\lim_{n\rightarrow\infty}\nabla^{(n)}=0$. The following lemma provides an upper bound on $\nabla^{(n)}$.

\begin{lemma}
Given the sample complexity $n\geq K\,\log(p)$ for some constant $K$, we have
\begin{equation}
\nabla^{(n)}\leq c\sqrt{\frac{\log(p)}{n}},
\nonumber
\end{equation}
with probability at least $1 - c_1 \exp(- c_2 n)$ for some positive constants $c$, $c_1$ and $c_2$.
\end{lemma}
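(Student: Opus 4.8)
The plan is to reduce the statement to a standard element-wise concentration bound for the sample covariance. By the first-order optimality noted before the lemma, $(\Theta^*)^{-1} = \Sigma^*$, so
\[
\nabla^{(n)} \;=\; \|\widehat{\Sigma}^n - \Sigma^*\|_\infty \;=\; \max_{i,j \in \{1,\dots,p\}} \bigl| \widehat{\Sigma}^n_{ij} - \Sigma^*_{ij} \bigr|,
\]
the maximum over the $O(p^2)$ index pairs of the deviation of a sample-covariance entry from its population value. I first record that the restricted eigenvalue assimption controls the diagonal: taking $\Delta = e_{ii}$ (a valid choice since a single nonzero entry satisfies the sparsity restriction) gives $C_{\min} \le \Sigma^*_{ii} \le \rho\, C_{\min}$, so $\max_i \Sigma^*_{ii}$ is bounded by a model constant. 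This is what will bound the sub-exponential scale below.

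Next I would establish a per-entry tail bound. Fixing $(i,j)$ and writing
\[
\widehat{\Sigma}^n_{ij} - \Sigma^*_{ij} \;=\; \frac{1}{n}\sum_{k=1}^n \bigl( X_i^{(k)} X_j^{(k)} - \mathbb{E}[X_i X_j] \bigr),
\]
this is an average of $n$ iid zero-mean terms. Since $X$ is jointly Gaussian, each coordinate is sub-Gaussian with parameter governed by $\Sigma^*_{ii}$, and hence each product $X_i^{(k)} X_j^{(k)}$ is sub-exponential with sub-exponential norm bounded in terms of $\max_i \Sigma^*_{ii} \le \rho\, C_{\min}$. Applying a Bernstein-type inequality for sums of iid sub-exponential variables then yields, for $t$ in the appropriate range, $\mathbb{P}\bigl( | \widehat{\Sigma}^n_{ij} - \Sigma^*_{ij} | > t \bigr) \le 2\exp( - c_2\, n\, t^2 )$ for a constant $c_2>0$ depending only on $\rho\, C_{\min}$; alternatively one may simply invoke the standard Gaussian covariance-concentration lemma used in \cite{Wainwright09,RWRY11}.

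To finish, I would take a union bound over all $p^2$ entries and set the threshold $t = c\sqrt{\log(p)/n}$, so that the failure probability is at most $2p^2 \exp(-c_2\, n\, t^2) = 2\exp\bigl((2 - c_2 c^2)\log p\bigr)$, which is polynomially small in $p$ once $c$ is chosen large enough, matching the high-probability guarantee stated in the lemma under the scaling $n \ge K\log p$. The one place requiring care — and the main obstacle — is that Bernstein's inequality for sub-exponential variables has two regimes, and the clean sub-Gaussian form $\exp(-c_2 n t^2)$ is valid only when $t$ is below the sub-exponential scale, i.e. $t \lesssim \rho\,C_{\min}$. Since $t = c\sqrt{\log(p)/n}$, this requirement is exactly $n \gtrsim \log p$, which is precisely the sample-complexity hypothesis $n \ge K\log p$; verifying that we remain in this regime is the crux of the argument, after which the union bound and the choice of $c$ are routine.
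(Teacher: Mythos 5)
Your proposal is correct and takes essentially the same route as the paper: the paper's proof is simply a citation to Lemma 1 of \cite{RWRY11}, which is exactly your argument — an entrywise sub-exponential (Bernstein) tail bound for products of Gaussians, followed by a union bound over the $p^2$ entries, with the sample-size condition $n \geq K\log p$ ensuring the deviation $t = c\sqrt{\log(p)/n}$ stays in the sub-Gaussian regime of the Bernstein bound. The one caveat is that your failure probability is polynomial in $1/p$ rather than the stated $c_1\exp(-c_2 n)$ (the two forms agree only when $n \asymp \log p$), but this looseness is inherited from the lemma statement itself and is equally present in the paper's cited proof.
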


\begin{proof}
The proof follows from Lemma 1 in \cite{RWRY11}.\\
\end{proof}

\noindent This entails that the restricted strong convexity and smoothness (i.e., the required assumptions of the general result in \cite{JJR11}) are satisfied. Now, we can specialize the results in \cite{JJR11} to obtain the following theorem:

\begin{theorem}[Global Greedy Sparsistency] \label{ThmGlobalGreedy}
Under the assumption above, suppose we run Algorithm \ref{Alg:globalgreedyalg} with stopping threshold $\EpsilonStop \ge (2c\eta/\rho^2)d\log(p)/n$, where, $d$ is the maximum node degree in the graphical model, and the true parameters $\Theta^*$ satisfy $\min_{t\in\mathcal{S}^*}|\Theta^*|\geq\sqrt{8\EpsilonStop/\rho^2}$, and further that number of samples scales as $$n > K\,d\,\log(p)$$ for some constant $K$. Then, with probability at least $1 - c_1 \exp(- c_2 n)$, we have 
\begin{itemize}
	\item[(a)] {\bf No False Exclusions:} $E^* - \widehat{E} = \emptyset.$
	\item[(b)] {\bf No False Inclusions:} $\widehat{E} - E^* = \emptyset.$
\end{itemize}	
\end{theorem}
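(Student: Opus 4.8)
The plan is to treat the general forward-backward sparsistency theorem of \cite{JJR11} as a black box and simply verify that its three structural hypotheses hold for the log-determinant loss $\Loss$ with the explicit constants supplied by the preceding lemmas. Restricted strong convexity and restricted smoothness are read off directly from the Taylor expansion displayed above: because $\grad\Loss(\Theta) = \SigHat^\numobs - \Theta^{-1}$, the two linear terms $\tracer{\Delta}{\SigHat^\numobs} - \tracer{\Theta^{-1}}{\Delta}$ are exactly $\tracer{\grad\Loss(\Theta)}{\Delta}$, so $R_\Delta$ is precisely the gap between $\Loss$ and its linearization. Lemma 2 sandwiches this gap as $\frac14 C_{\min}^2\|\Delta\|_F^2 \le R_\Delta \le \frac12\rho^2 C_{\min}^2\|\Delta\|_F^2$ over the class of $\eta d$-row-sparse $\Delta$ with $\|\Delta\|_F\le 1$, which identifies the restricted curvature constants $\LipshitzLower = \frac14 C_{\min}^2$ and $\LipshitzUpper = \frac12\rho^2 C_{\min}^2$ and hence the restricted condition number $\LipshitzUpper/\LipshitzLower = 2\rho^2$. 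The noise level is furnished by Lemma 3, which bounds the gradient at the truth as $\nabla^{(n)} = \|\grad\Loss(\ThetaStar)\|_\infty = \|\SigHat^\numobs - (\ThetaStar)^{-1}\|_\infty \le c\sqrt{\log(p)/n}$ on a high-probability event.

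Next I would match the abstract requirements of \cite{JJR11} to the hypotheses stated in the theorem. The generic result asks for (i) a sparsity budget $\eta$ along the greedy path that grows with the restricted condition number; substituting $\LipshitzUpper/\LipshitzLower = 2\rho^2$ reproduces, up to the way constants are bundled, the assumed bound $\eta \ge 2 + 4\rho^2(\sqrt{(\rho^2-\rho)/d}+\sqrt2)^2$. It asks for (ii) a stopping threshold of order $\eta\,d\,(\nabla^{(n)})^2/\LipshitzLower$; feeding in $(\nabla^{(n)})^2 \le c^2\log(p)/n$ and absorbing $c$ and $C_{\min}$ into the free constants gives a threshold of order $d\log(p)/n$, matching the stated choice $\EpsilonStop \ge (2c\eta/\rho^2)\,d\log(p)/n$. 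Finally it asks for (iii) a minimum-signal condition ensuring that removing any true coordinate costs more loss than $\EpsilonStop$; since restricted strong convexity makes the cost of deleting a coordinate of size $|\ThetaStar_t|$ at least $\LipshitzLower|\ThetaStar_t|^2$, demanding $\LipshitzLower|\ThetaStar_t|^2 \gtrsim \EpsilonStop$ yields $|\ThetaStar_t| \gtrsim \sqrt{\EpsilonStop/\LipshitzLower}$, which matches the assumed $\min_{t\in\mathcal{S}^*}|\ThetaStar_t| \ge \sqrt{8\EpsilonStop/\rho^2}$ after the same bundling of constants.

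With the three hypotheses in place, the two conclusions drop out of the \cite{JJR11} guarantee. The forward analysis shows every variable whose inclusion drops the loss by at least $\EpsilonStop$ is eventually added and, thanks to the backward pruning controlled by $\nu$, no spurious coordinate outside $E^*$ is retained, giving part (b); the minimum-signal condition together with restricted strong convexity forces each true coordinate to produce a loss decrease exceeding $\EpsilonStop$, so all of $E^*$ is selected and kept, giving part (a). The success probability $1 - c_1\exp(-c_2 n)$ follows by intersecting the high-probability events of Lemmas 1--3 through a union bound and using $n > K\,d\,\log(p)$ so that one exponential term absorbs the rest.

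The step I expect to be the main obstacle is the sparsity and radius bookkeeping needed to keep the restricted conditions applicable along the entire run rather than merely at $\ThetaStar$. Lemma 2 certifies curvature only for $\eta d$-row-sparse directions with $\|\Delta\|_F\le 1$, so one must argue that every iterate stays in this region: that the active support never exceeds the per-row budget $\eta d$ (this is exactly what the lower bound on $\eta$ purchases, since the forward-backward dynamics bound the support size by a multiple of $|E^*|$ scaled by the condition number $2\rho^2$), and that $\|\ThetaStar - \ThetaHatk\|_F\le 1$ so that the remainder bound applies. Tracking this interplay between the admissible sparsity $\eta d$, the condition number $2\rho^2$, and the threshold $\EpsilonStop$ is precisely where the favorable $O(d\log p)$ sample complexity---as opposed to the $O(d^2\log p)$ of the $\ell_1$-regularized MLE---is won, and it is the part where the constants in the $\eta$-assumption must be tracked carefully rather than absorbed.
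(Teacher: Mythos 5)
Your proposal follows essentially the same route as the paper: the paper's proof likewise consists of verifying the restricted strong convexity/smoothness constants via the Taylor expansion and Lemma 2, bounding the gradient noise $\nabla^{(n)}$ via Lemma 3, and then invoking the general forward-backward guarantee of \cite{JJR11} as a black box, with the high-probability events combined over the sample-complexity condition $n > K\,d\,\log(p)$. Your additional remarks on matching the constants in $\EpsilonStop$, the minimum-signal condition, and the sparsity/radius bookkeeping along the greedy path make explicit exactly the specialization the paper leaves implicit, so the two arguments coincide.
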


\section{Greedy Neighborhood Estimation}
Denote by $\mathcal{N}^*(\svert)$ the set of neighbors of a vertex $\svert \in V$, so that $\mathcal{N}^*(\svert) = \{t : (\svert,t) \in E^*\}$. Then the graphical model selection problem is equivalent to that of estimating the neighborhoods $\hat{\mathcal{N}}_n(\svert) \subset \vertex$, so that $\mathbb{P}[\hat{\mathcal{N}}_n(\svert) = \mathcal{N}^*(\svert); \forall \svert \in \vertex]\rightarrow 1$ as $n \rightarrow \infty$. 

For any pair of random variables $X_\svert$ and $X_t$, the parameter $\Theta_{\svert t}$ fully characterizes whether there is an edge between them, and can be estimated via its conditional likelihood. In particular, defining $\Theta_\svert \defn \{\Theta_{\svert t}\}_{t \neq \svert}$, our goal is to use the conditional likelihood of $X_\svert$ conditioned on $X_{\vertex\backslash\svert}$ to estimate the \emph{support} of $\Theta_\svert$ and hence its neighborhood $\mathcal{N}(\svert)$. This conditional distribution of $X_\svert$ conditioned on $X_{\vertex\backslash\svert}$ generated by \eqref{EqnDefnGaussMRF} is given by (considering $\Theta^{-1}=\Sigma$)

\vspace{-0.4cm}
\small\begin{equation}
	X_\svert | X_{\vertex\backslash\svert} \sim \mathcal{N}\left(-\Theta^{-1}_{\svert\,\backslash\svert}\Theta_{\backslash\svert\backslash\svert} \, X_{\vertex\backslash\svert},\;\Theta^{-1}_{\svert\svert}\!-\!\Theta^{-1}_{\svert\,\backslash\svert} \Theta_{\backslash\svert\backslash\svert}\Theta^{-1}_{\backslash\svert\,\svert}\right).
\nonumber
\end{equation}\normalsize
However, note that we do not need to estimate the variance of this conditional distribution in order to obtain the support of $\Theta_\svert=\Theta_{\svert\,\backslash\svert}$. In
particular, the solution to the following least squares loss
\begin{align*}
	\ThetaCond_\svert^* = \arg\min_{\ThetaCond_\svert} \mathbb{E}[(X_\svert - \sum_{t \neq \svert} \ThetaCond_{\svert t} X_{t})^2],
\end{align*}
would satisfy $\text{supp}(\ThetaCond^*_\svert) = \text{supp}(\Theta_\svert^*)$.

Given the $n$ samples $X^{(1)},\ldots,X^{(n)}$, we thus use the sample-based linear loss
\begin{align}
\label{eq:Loss-Fn}
\Loss(\ThetaCond_\svert) = \frac{1}{2\numobs} \sum_{i=1}^\numobs \left(X^{(i)}_\svert - \sum_{t \neq \svert} \ThetaCond_{\svert t} X^{(i)}_{t}\right)^2.
\end{align}
Adapting the greedy algorithm from the previous section to this linear loss at each node thus yields Algorithm \ref{Alg:nbdgreedyalg}.

\begin{algorithm}[t]
\caption{ Greedy forward-backward algorithm for marginal Gaussian covariance estimation}
\label{Alg:nbdgreedyalg}
\begin{algorithmic}
\STATE  {\bf Input}: Data Vectors $X^{(1)},\ldots,X^{(n)}$, Stopping Threshold $\epsilon_S$, Backward Step Factor $\nu \in (0,1)$
\STATE  {\bf Output}: Marginal Vector $\widehat{\ThetaCond}_\svert$ 
\STATE
	\STATE Initialize $\ThetaCondHat^{(0)}\leftarrow \mathbf{0}$, $\widehat{S}^{(0)}\leftarrow \emptyset$, and $k \leftarrow 1$
	\WHILE[\textit{Forward Step}]{true}
		\STATE {\small $\displaystyle(t_*,\alpha_*)\longleftarrow\arg\!\!\! \min_{t\in\left(\SuppHatkm\right)^c\,;\,\alpha}\!   
							\Loss\left(\ThetaCondHatkm\!+\!\alpha e_t\right)$}
		\STATE {\small $\SuppHatk \longleftarrow \SuppHatkm \cup \{t_*\}$}\\
		\STATE {\small $\delta_f^{(k)} \longleftarrow \Loss(\ThetaCondHatkm) - \Loss(\ThetaCondHatkm+\alpha_*e_{t_*})$}
		\IF {\small $\delta_f^{(k)}\leq\epsilon_S$}
		\STATE {\small \bf break}
		\ENDIF
		\STATE
		\STATE {\small $\displaystyle \ThetaCondHatk \longleftarrow 
				\arg\min_{\,\Gamma_\svert} \,\Loss\big((\Gamma_{\svert})_{\,\SuppHatk}\big)$}\\
		\STATE {\small $k \longleftarrow k+1$}\\
		\STATE
		\WHILE[\textit{Backward Step}]{true}
		\STATE {\small $\displaystyle t^* \longleftarrow \arg\min_{t \in \SuppHatkm}\Loss(\ThetaCondHatkm-\ThetaCondHatkmt e_t)$}\\
		\IF {\small $\Loss\big(\ThetaCondHatkm - \ThetaCondHatkmtstar e_{t^*}\big) - \Loss\big(\ThetaCondHatkm\big) > \nu\delta_f^{(k)}$}
		\STATE {\small \bf break}
		\ENDIF
		\STATE
		\STATE {\small $\SuppHatkm\longleftarrow\SuppHatk-\{t^*\}$}\\
		\STATE {\small $\displaystyle \ThetaCondHatkm \longleftarrow 
				\arg\min_{\,\Gamma_\svert} \,\Loss\big((\Gamma_\svert)_{\SuppHatkm}\big)$}\\
		\STATE {\small $k\longleftarrow k-1$}\\
		\ENDWHILE
	\ENDWHILE
\STATE
\end{algorithmic}
\end{algorithm}

\vskip0.2in

\noindent {\bf Assumption:}\\
\noindent Let $\rho\geq 1$ be a constant and $\Delta\in\real^{p-1}$ be an arbitrary $\eta d$-sparse vector, where, $\eta\geq 2+4\rho^2(\sqrt{(\rho^2-\rho)/d}+\sqrt{2})^2$. We require the marginal population Fisher information matrix $\Sigma_{\backslash\svert}^*=\mathbb{E}\left[X_{\backslash\svert}X_{\backslash\svert}^T\right]$ satisfy the restricted eigenvalue property, i.e., for some positive constants $C_{\min}$, we have
\begin{equation}
C_{\min}\|\Delta\|_F\leq\|\Sigma_{\backslash\svert}^*\Delta\|_F\leq\rho C_{\min}\|\Delta\|_F.
\nonumber
\end{equation}

\begin{lemma}
Under assumption above, and for some arbitrary small constant $\alpha>0$, the marginal sample Fisher information matrix $\widehat{\Sigma}^{n}_{\backslash\svert} = \frac{1}{n}\sum_{i=1}^n X_{\backslash\svert}^{(i)}X_{\backslash\svert}^{(i)T}$, with probability at least $1 - c_1 \exp(- c_2 n)$,  satisfies the 
condition that for any symmetric matrix $\Delta$ with $\eta d$ non-zero entries per row (and column),
\begin{equation}
(1-\alpha)C_{\min}\|\Delta\|_F\leq\|\widehat{\Sigma}^{n}_{\backslash\svert}\Delta\|_F\leq (1+\alpha)\rho C_{\min}\|\Delta\|_F,
\nonumber
\end{equation}
provided that $n\geq K\,d\,\log(p)$ for some positive constant $K$, $c_1$ and $c_2$.
\end{lemma}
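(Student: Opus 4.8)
The plan is to transfer the restricted eigenvalue property from the population matrix $\Sigma^*_{\backslash\svert}$ to its empirical counterpart by a perturbation argument, exactly as in the analogous lemma already established for the full sample covariance $\widehat{\Sigma}^n$. The key observation is that $\widehat{\Sigma}^n_{\backslash\svert}$ is simply the sample covariance of the $(p-1)$-dimensional zero-mean Gaussian subvector $X_{\backslash\svert}$, whose population covariance is $\Sigma^*_{\backslash\svert}$, and this submatrix inherits the restricted eigenvalue hypothesis by assumption. Consequently the argument underlying the full-matrix lemma applies verbatim with $p$ replaced by $p-1$; since $\log(p-1)\le\log(p)$, the sample scaling $n\ge K\,d\,\log(p)$ is unaffected.

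To make the transfer explicit I would write $\widehat{\Sigma}^n_{\backslash\svert}\Delta = \Sigma^*_{\backslash\svert}\Delta + (\widehat{\Sigma}^n_{\backslash\svert}-\Sigma^*_{\backslash\svert})\Delta$ and apply the triangle inequality. Combined with the population bounds $C_{\min}\|\Delta\|_F\le\|\Sigma^*_{\backslash\svert}\Delta\|_F\le\rho C_{\min}\|\Delta\|_F$, both sides of the claim reduce to the single uniform deviation bound
\begin{equation}
\sup_{\Delta} \frac{\|(\widehat{\Sigma}^n_{\backslash\svert}-\Sigma^*_{\backslash\svert})\Delta\|_F}{\|\Delta\|_F} \le \alpha\, C_{\min},
\nonumber
\end{equation}
where the supremum ranges over $\eta d$-sparse $\Delta$; since $\rho\ge 1$, this one bound suffices simultaneously for the lower estimate $(1-\alpha)C_{\min}$ and the upper estimate $(1+\alpha)\rho C_{\min}$.

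The remaining task is to establish this restricted operator deviation with probability at least $1-c_1\exp(-c_2 n)$. I would proceed by discretization. For a fixed support set $S$ with $|S|=\eta d$, the quantity $\|(\widehat{\Sigma}^n_{\backslash\svert}-\Sigma^*_{\backslash\svert})\Delta\|_F$ with $\operatorname{supp}(\Delta)\subseteq S$ is the action of the column-submatrix $(\widehat{\Sigma}^n_{\backslash\svert}-\Sigma^*_{\backslash\svert})_{\cdot,S}$, so its supremum over $\|\Delta\|_F=1$ is that submatrix's operator norm. For a fixed unit direction the deviation is a centered quadratic form in i.i.d.\ Gaussians, hence sub-exponential, so a Bernstein-type (Hanson--Wright) tail bound yields a deviation of order $\sqrt{\eta d/n}$ with exponentially small failure probability. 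Covering the $\eta d$-dimensional unit sphere by an $\epsilon$-net of cardinality $e^{O(\eta d)}$ upgrades this to a bound on the submatrix operator norm for the fixed $S$, and a union bound over the $\binom{p-1}{\eta d}$ choices of $S$ multiplies the failure probability by $e^{O(\eta d\log(p/\eta d))}$.

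Choosing $n\ge K\,d\,\log(p)$ with $K$ large enough makes the exponent $c_2 n$ dominate the metric entropy $O(\eta d\log(p/\eta d))$ from the union bound, keeping the deviation below $\alpha C_{\min}$ while the failure probability remains $1-c_1\exp(-c_2 n)$; this is precisely where the $d\,\log(p)$ sample size enters. The main obstacle is the uniformity over the continuum of sparse directions: because $(\widehat{\Sigma}^n_{\backslash\svert}-\Sigma^*_{\backslash\svert})\Delta$ is \emph{not} sparse even when $\Delta$ is, one cannot reduce to an entrywise $\ell_\infty$ bound on $\widehat{\Sigma}^n_{\backslash\svert}-\Sigma^*_{\backslash\svert}$ and must instead control the operator norm of every $\eta d$-column submatrix simultaneously. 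Balancing the sub-exponential concentration rate against this combinatorial metric entropy is exactly what dictates the constant $K$ and the $\eta d$-sparsity budget imposed in the hypothesis.
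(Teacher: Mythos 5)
Your reduction, via the triangle inequality, to the uniform deviation bound $\sup_{\Delta}\|(\widehat{\Sigma}^{n}_{\backslash r}-\Sigma^{*}_{\backslash r})\Delta\|_2/\|\Delta\|_2\le\alpha C_{\min}$ over $\eta d$-sparse $\Delta$ is where the argument breaks: that bound is false in the regime $n=\Theta(d\log p)$ with $p\gg n$. Take the $1$-sparse direction $\Delta=e_1$. Then $(\widehat{\Sigma}^{n}_{\backslash r}-\Sigma^{*}_{\backslash r})e_1$ is an entire column of the deviation matrix, a $(p-1)$-dimensional vector each of whose coordinates fluctuates at scale $n^{-1/2}$, so its $\ell_2$ norm is of order $\sqrt{p/n}\gg\alpha C_{\min}$. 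Equivalently, the operator norm of a tall $(p-1)\times\eta d$ column submatrix requires a net on the \emph{left}, $(p-2)$-dimensional sphere, whose metric entropy is of order $p$, and no $\exp(-c_2 n)$ tail with $n=O(d\log p)$ can absorb a union bound of size $e^{O(p)}$. Consequently your claim that a single fixed sparse direction has deviation $O(\sqrt{\eta d/n})$ by Hanson--Wright is incorrect; that would be the order only if the output coordinates were also restricted to the support. You correctly flag the density of the image $(\widehat{\Sigma}-\Sigma^*)\Delta$ as ``the main obstacle,'' but the obstacle is fatal for the statement in this matrix--vector-product form, not merely a bookkeeping issue dictating the constant $K$.

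For comparison, the paper's proof is a one-line citation of Lemma 9 (Appendix K) of \cite{Wainwright09}, which is a spectral concentration statement for the $\eta d\times\eta d$ \emph{principal} submatrices: for a fixed support $S$ with $|S|=\eta d$, the spectrum of $\widehat{\Sigma}_{SS}$ stays within a constant factor of that of $\Sigma^*_{SS}$ with probability $1-2\exp(-cn)$ once $n\gtrsim\eta d$, and the union over the $\binom{p-1}{\eta d}$ supports costs only $O(\eta d\log p)$ in the exponent --- which is exactly what forces $n\ge K\,d\log(p)$. The quantity that concentrates at this sample size is therefore the restricted quadratic form $\Delta^{T}\widehat{\Sigma}\Delta=\Delta_S^{T}\widehat{\Sigma}_{SS}\Delta_S$, in which \emph{both} arguments are sparse, and that is also what the downstream restricted strong convexity/smoothness analysis of \cite{JJR11} actually consumes. (The lemma's phrasing with $\|\widehat{\Sigma}^{n}_{\backslash r}\Delta\|_F$ invites your reading, but the cited proof only supports the quadratic-form/principal-submatrix interpretation.) Your overall architecture --- population-to-sample perturbation, per-support net, union bound over supports --- is the right one; to repair it, run it on $\Delta_S^{T}(\widehat{\Sigma}_{SS}-\Sigma^*_{SS})\Delta_S$, where the relevant random matrix is $\eta d\times\eta d$, a single net of cardinality $e^{O(\eta d)}$ suffices, and your entropy-versus-concentration bookkeeping then goes through as written.
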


\begin{proof}
The proof follows from Lemma 9 (Appendix K) in \cite{Wainwright09}.\\
\end{proof}

Let $\nabla_\svert^{(n)}:=\max_{t}\left|\frac{1}{\numobs} \sum_{i=1}^\numobs X^{(i)}_{t} \left(X^{(i)}_\svert - \sum_{t \neq \svert} \ThetaCond^*_{\svert t} X^{(i)}_{t}\right)\right|$. By first order condition on the optimality of $\ThetaCond^*_{\svert t}$, it is clear that $\lim_{n\rightarrow\infty}\nabla_\svert^{(n)}=0$. The following lemma provides an upper bound on $\nabla_\svert^{(n)}$.

\begin{lemma}
Given the sample complexity $n\geq K\,\log(p)$ for some constant $K$, we have
\begin{equation}
\nabla_\svert^{(n)}\leq c\sqrt{\frac{\log(p)}{n}},
\nonumber
\end{equation}
with probability at least $1 - c_1 \exp(- c_2 n)$ for some positive constants $c$, $c_1$ and $c_2$.
\end{lemma}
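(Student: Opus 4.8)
\noindent The plan is to recognize $\nabla_\svert^{(n)}$ as the maximal deviation of an empirical cross-moment from its (zero) population value, and then to reproduce the Gaussian concentration argument of Lemma~1 in \cite{RWRY11}. First I would introduce the population residual $W^{(i)} := X^{(i)}_\svert - \sum_{t\neq\svert}\ThetaCond^*_{\svert t}X^{(i)}_t$, so that $\nabla_\svert^{(n)} = \max_{t}\left|\frac{1}{n}\sum_{i=1}^n X^{(i)}_t W^{(i)}\right|$. Since $X$ is jointly Gaussian and $W$ is a fixed linear combination of its coordinates, the pair $(X_t, W)$ is jointly zero-mean Gaussian for every $t\neq\svert$. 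The first-order optimality condition defining $\ThetaCond^*_\svert$ (the one already invoked to argue $\lim_{n\to\infty}\nabla_\svert^{(n)}=0$) gives $\mathbb{E}[X_t W]=0$ for all $t\neq\svert$; in fact $W$ is uncorrelated with, hence independent of, the entire vector $X_{\backslash\svert}$. Thus each summand $X^{(i)}_t W^{(i)}$ is a product of zero-mean jointly Gaussian variables, and $\frac{1}{n}\sum_i X^{(i)}_t W^{(i)}$ is an empirical second moment centered exactly at its population value $0$.

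Next, for each fixed $t$ I would control the tail of this centered quantity. A product of zero-mean (jointly) Gaussian variables is sub-exponential, with sub-exponential parameter governed by the variances $\Sigma^*_{tt}=\operatorname{Var}(X_t)$ and the residual variance $\operatorname{Var}(W)$, both of which are bounded above by model-dependent constants under the covariance assumption. Applying a Bernstein-type tail bound for sums of i.i.d.\ sub-exponential variables then yields, for each fixed $t$ and all $\delta$ below a fixed constant threshold,
\[
\mathbb{P}\!\left(\left|\frac{1}{n}\sum_{i=1}^n X^{(i)}_t W^{(i)}\right| > \delta\right) \;\le\; c_1'\,\exp\!\left(-c_2'\,n\,\delta^2\right),
\]
which is precisely the per-coordinate estimate inherited from \cite{RWRY11}, Lemma~1.

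Finally, a union bound over the $p-1$ coordinates $t\neq\svert$ gives
\[
\mathbb{P}\!\left(\nabla_\svert^{(n)} > \delta\right) \;\le\; (p-1)\,c_1'\,\exp\!\left(-c_2'\,n\,\delta^2\right).
\]
Choosing $\delta = c\sqrt{\log(p)/n}$ with $c$ large enough makes the exponent $c_2'\,n\delta^2 = c_2' c^2\log p$ dominate the $\log(p-1)$ coming from the union bound, leaving a failure probability that is polynomially small in $p$; the sample-complexity assumption $n\ge K\log p$ simultaneously keeps $\delta$ within the admissible range of the per-coordinate bound and delivers the high-probability guarantee in the form stated for the companion lemmas of this section.

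The main obstacle here is not conceptual but a matter of bookkeeping: one must (i) verify that the products $X^{(i)}_t W^{(i)}$ have a sub-exponential norm bounded \emph{uniformly in} $t$, which requires that $\Sigma^*_{tt}$ and $\operatorname{Var}(W)$ be controlled by the restricted-eigenvalue/covariance assumption, and (ii) ensure the chosen $\delta$ lies in the quadratic (Gaussian) regime of the Bernstein bound rather than its linear tail, which is exactly where the condition $n\ge K\log p$ enters.
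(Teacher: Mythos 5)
Your proposal is correct and follows essentially the same route as the paper: the paper's own proof is a bare citation to Lemma 5 of \cite{Wainwright09}, and the argument underlying that cited lemma is exactly what you spell out --- the residual $W$ is independent of $X_{\backslash\svert}$ by first-order optimality plus joint Gaussianity, products of jointly Gaussian zero-mean variables are sub-exponential with parameters controlled by the (bounded) variances, and a Bernstein bound followed by a union bound over the $p-1$ coordinates gives the rate $c\sqrt{\log(p)/n}$. The one discrepancy worth noting is the probability statement: with $\delta = c\sqrt{\log(p)/n}$ the failure probability has exponent of order $\log p$ rather than order $n$, so the claimed form $1 - c_1\exp(-c_2 n)$ is not literally delivered unless $n = O(\log p)$ --- but this looseness sits in the paper's own statement (and in its companion lemmas), not in your argument, which yields the standard, correct polynomial-in-$p$ guarantee.
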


\begin{proof}
The proof follows from Lemma 5 in \cite{Wainwright09}.\\
\end{proof}

\noindent This entails that the restricted strong convexity and smoothness (i.e., the required assumptions of the general result in \cite{JJR11}) are satisfied with constants $C_{\min}$ and $\rho C_{\min}$, respectively; because, the third and higher order derivatives are zero. Now, we can then specialize the results in \cite{JJR11} to obtain the following theorem:

\begin{theorem}[Neighborhood Greedy Sparsistency] \label{ThmNbdGreedy}
Under the assumption above, suppose we run Algorithm \ref{Alg:nbdgreedyalg} with stopping threshold $\EpsilonStop \ge (8c\rho\eta/C_{\min})d\log(p)/n$, where, $d$ is the maximum node degree in the graphical model, and the true parameters $\ThetaCond^*_\svert$ satisfy $\min_{t\in\mathcal{N}^*(\svert)}|\ThetaCond^*_{\svert t}|\geq\sqrt{32\rho\EpsilonStop/C_{\min}}$, and further that number of samples scales as $$n > K\,d\,\log(p)$$ for some constant $K$. Then, with probability at least $1 - c_1 \exp(- c_2 n)$, we have 
\begin{itemize}
	\item[(a)] {\bf No False Exclusions:} $E^*_\svert - \widehat{E}_\svert = \emptyset.$
	\item[(b)] {\bf No False Inclusions:} $\widehat{E}_\svert - E^*_\svert = \emptyset.$
\end{itemize}	
\end{theorem}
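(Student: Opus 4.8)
The plan is to derive the statement as a specialization of the generic forward--backward greedy sparsistency theorem of \cite{JJR11}, whose hypotheses are precisely: (i) restricted strong convexity of the loss with some lower curvature constant $\LipshitzLower$; (ii) restricted smoothness with some upper constant $\LipshitzUpper$; (iii) a bound on the gradient of the loss evaluated at the true parameter; and (iv) a matching choice of stopping threshold $\EpsilonStop$ together with a minimum-signal condition on the smallest nonzero true coefficient. The three lemmas preceding the theorem verify (i)--(iii) for the node-conditional least-squares loss $\Loss(\ThetaCond_\svert)$ in \eqref{eq:Loss-Fn}. Crucially, since this loss is quadratic its Hessian is exactly the sample Fisher information $\widehat{\Sigma}^n_{\backslash\svert}$ and all third- and higher-order Taylor terms vanish; hence the restricted eigenvalue lemma yields the convexity/smoothness inequalities \emph{exactly}, with $\LipshitzLower = (1-\alpha)C_{\min}$ and $\LipshitzUpper = (1+\alpha)\rho C_{\min}$ for arbitrarily small $\alpha > 0$, uniformly over $\eta d$-sparse directions.

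Second, I would translate the abstract conditions of \cite{JJR11} into the explicit constants stated here. The generic theorem requires the stopping threshold to dominate a quantity of order $(\eta d/\LipshitzLower)\,(\nabla_\svert^{(n)})^2$, so substituting the gradient bound $\nabla_\svert^{(n)} \le c\sqrt{\log(p)/n}$ and the curvature constant gives $\EpsilonStop$ of the order $(\eta d/C_{\min})\,\log(p)/n$, which, after absorbing the $\rho$ and numerical factors carried through \cite{JJR11}, matches the stated requirement $\EpsilonStop \ge (8c\rho\eta/C_{\min})\,d\log(p)/n$. Likewise the generic minimum-signal requirement of the form $\min|\,\text{nonzero}\,| \gtrsim \sqrt{\EpsilonStop/\LipshitzLower}$ specializes to $\min_{t\in\mathcal{N}^*(\svert)}|\ThetaCond^*_{\svert t}| \ge \sqrt{32\rho\EpsilonStop/C_{\min}}$.

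Third, I would invoke the two conclusions of the greedy analysis. The stopping rule with backward factor $\nu \in (0,1)$ guarantees that each accepted round decreases the loss by at least $(1-\nu)\EpsilonStop$, so the algorithm halts after finitely many steps with an active set of size $O(\eta d)$; this is exactly why the restricted eigenvalue lemma is needed at sparsity level $\eta d$ rather than $d$. \textbf{No false inclusions} follows because restricted strong convexity forces any coordinate outside $\mathcal{N}^*(\svert)$ that is retained to contribute less than $\nu\EpsilonStop$ to the loss, so the backward step prunes it; quantitatively, the squared magnitude of the spurious coefficients is controlled by $(\nabla_\svert^{(n)})^2/\LipshitzLower$, which the threshold dominates. \textbf{No false exclusions} follows from the minimum-signal condition: were any true neighbor missing at termination, the forward step applied to it would yield a decrease exceeding $\EpsilonStop$ (by restricted smoothness together with the lower bound on $|\ThetaCond^*_{\svert t}|$), contradicting the stopping criterion. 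Together these give $\widehat{E}_\svert = E^*_\svert$, and the failure probability is inherited from the high-probability events of the three lemmas, i.e.\ $1 - c_1\exp(-c_2 n)$ provided $n > Kd\log(p)$.

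The main obstacle is not any single inequality but the uniformity required to apply restricted strong convexity along the \emph{data-dependent} sequence of active sets $\widehat S^{(k)}$ produced by the algorithm. Because these supports are random and depend on the samples, the restricted eigenvalue bound must hold simultaneously over all $\eta d$-sparse directions, and one must verify that the active set never exceeds this sparsity budget; this is precisely the role of the condition $\eta \ge 2 + 4\rho^2(\sqrt{(\rho^2-\rho)/d}+\sqrt{2})^2$, which caps the overshoot of the forward steps relative to the backward pruning. Checking that this sparsity-level bookkeeping, as carried out in \cite{JJR11}, goes through verbatim for the quadratic node-conditional loss---and that the numerical constants line up as above---is the part that warrants the most care.
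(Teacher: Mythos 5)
Your proposal follows the paper's own route exactly: verify the hypotheses of the general forward--backward greedy result of \cite{JJR11} via the preceding lemmas (restricted eigenvalue of the sample Fisher information, which for the quadratic node-conditional loss directly yields restricted strong convexity/smoothness since all higher-order Taylor terms vanish, plus the gradient bound), and then specialize that generic theorem to obtain the stated constants and conclusions. The only trivial discrepancy is that the neighborhood section has two supporting lemmas rather than three (the separate remainder-term lemma is needed only for the global log-determinant loss), which you in fact implicitly acknowledge by noting the quadratic structure.
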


\section{Comparisons to Related Methods}\label{SecCompare}
In this section, we compare our global and local greedy methods to the $\ell_1$-regularized Gaussian MLE, analyzed in \cite{RWRY11}, and to $\ell_1$-regularization (Lasso) based neighborhood selection, analyzed in \cite{MeinsBuhl2006,Wainwright09}.

\subsection{Sample Complexity}
Our greedy algorithm requires $\mathcal{O}(d\,\log(p))$ samples to recover the exact structure of the graph for both the global and local neighborhood based methods.
In contrast, the $\ell_1$-regularized Gaussian MLE~\cite{RWRY11} requires $\mathcal{O}(d^2\,\log(p))$ samples to guarantee structure recovery with high probability.
The linear neighborhood selection with $\ell_1$-regularization \cite{MeinsBuhl2006} requires $\mathcal{O}(d\,\log(p))$ samples to guarantee sparsistency, similar to our greedy
algorithms.

\subsection{Minimum Non-Zero Values}
The $\ell_1$-regularized Gaussian MLE imposes the model condition that the minimum non-zero entry of $\Sigma^{*\,-1}$ satisfy $\Sigma^{*\,-1}_{\min}= \mathcal{O}(1/d)$. Our greedy algorithms allow for a broader range of minimum non-zero values $\Sigma^{*\,-1}_{\min} = \mathcal{O}(1/\sqrt{d})$. The linear neighborhood selection with $\ell_1$-regularization again matches our greedy algorithms and only requires that $\Sigma^{*\,-1}_{\min} = \mathcal{O}(1/\sqrt{d})$.

\begin{figure}[t]
\subfigure[Star]{
   \includegraphics[width=0.19\textwidth] {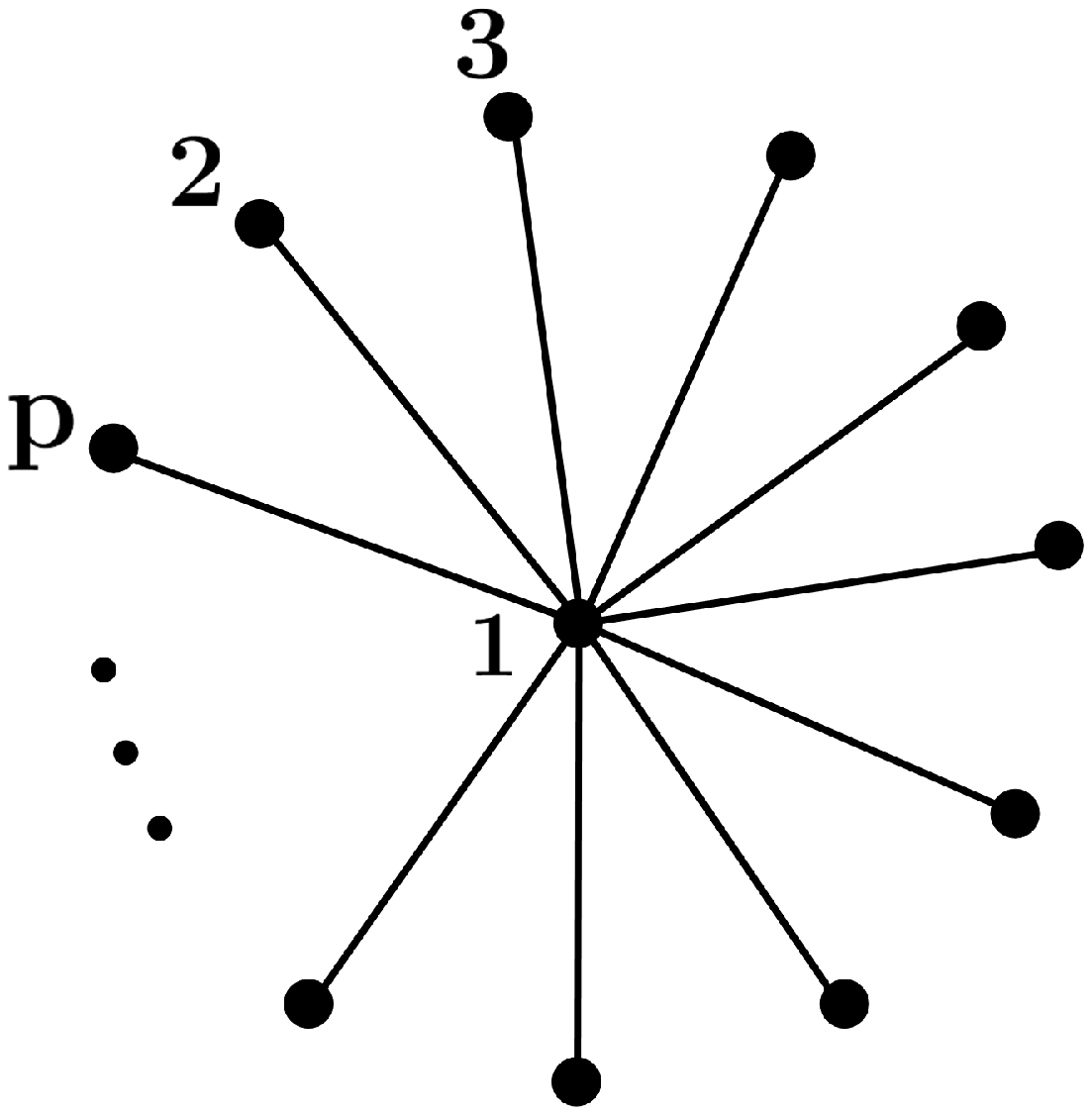}
   \label{fig1:star}
 }
 \subfigure[Chain]{
   \includegraphics[width=0.23\textwidth] {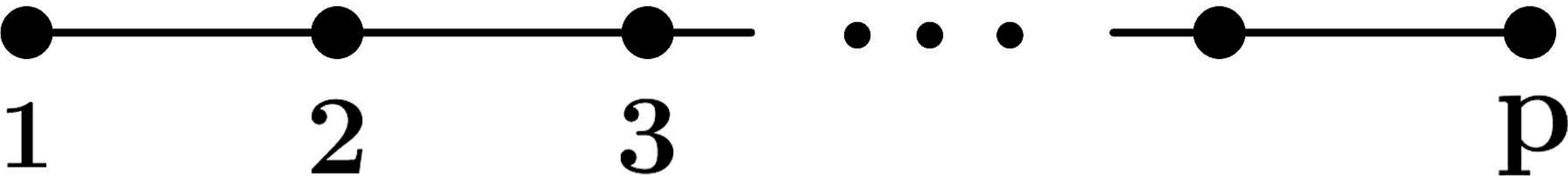}
   \label{fig1:line}
 }
  \subfigure[Grid]{
   \includegraphics[width=0.21\textwidth] {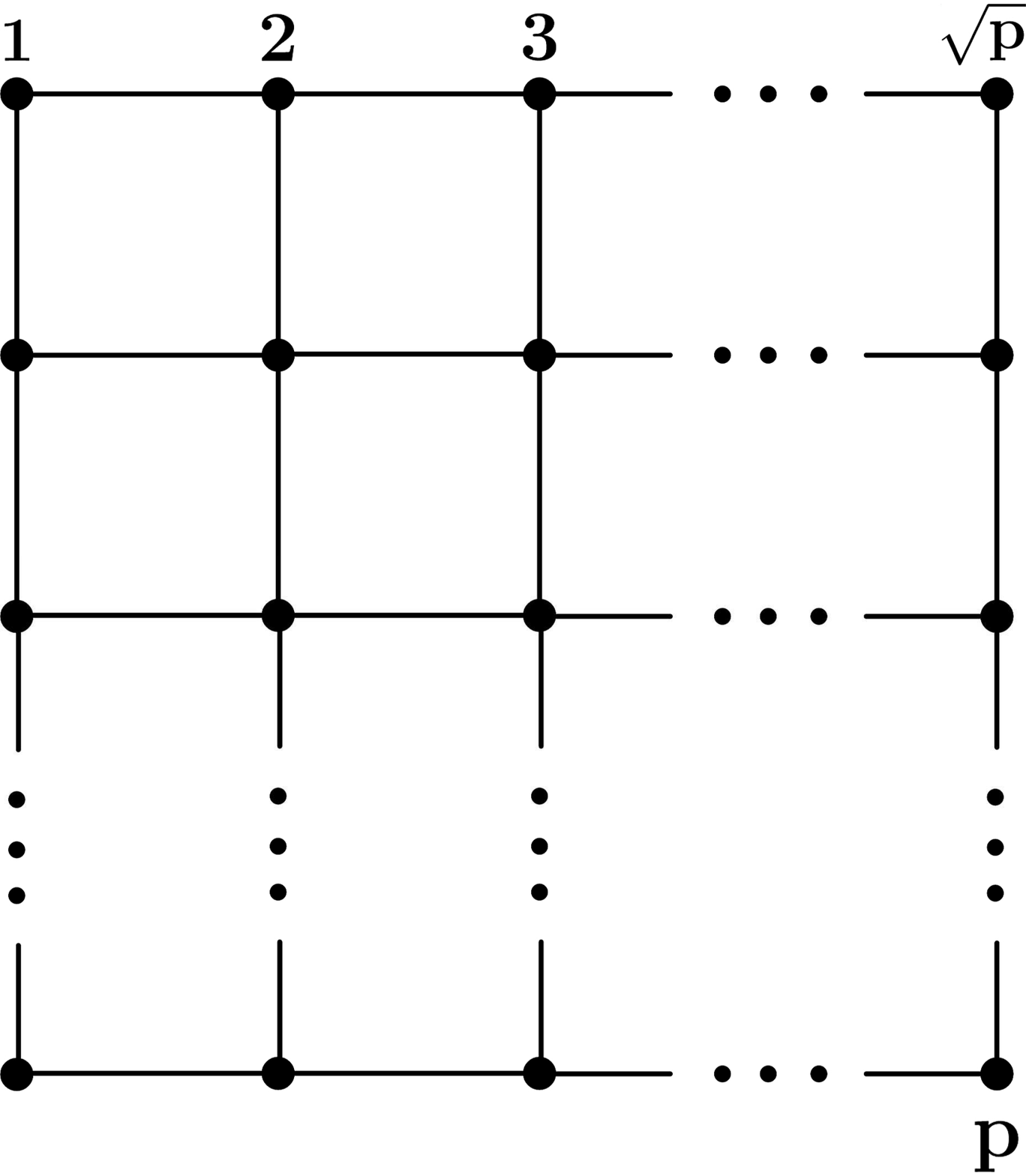}
   \label{fig1:grid}
 } $\qquad\quad$
 \subfigure[Diamond]{
  \includegraphics[width=0.13\textwidth] {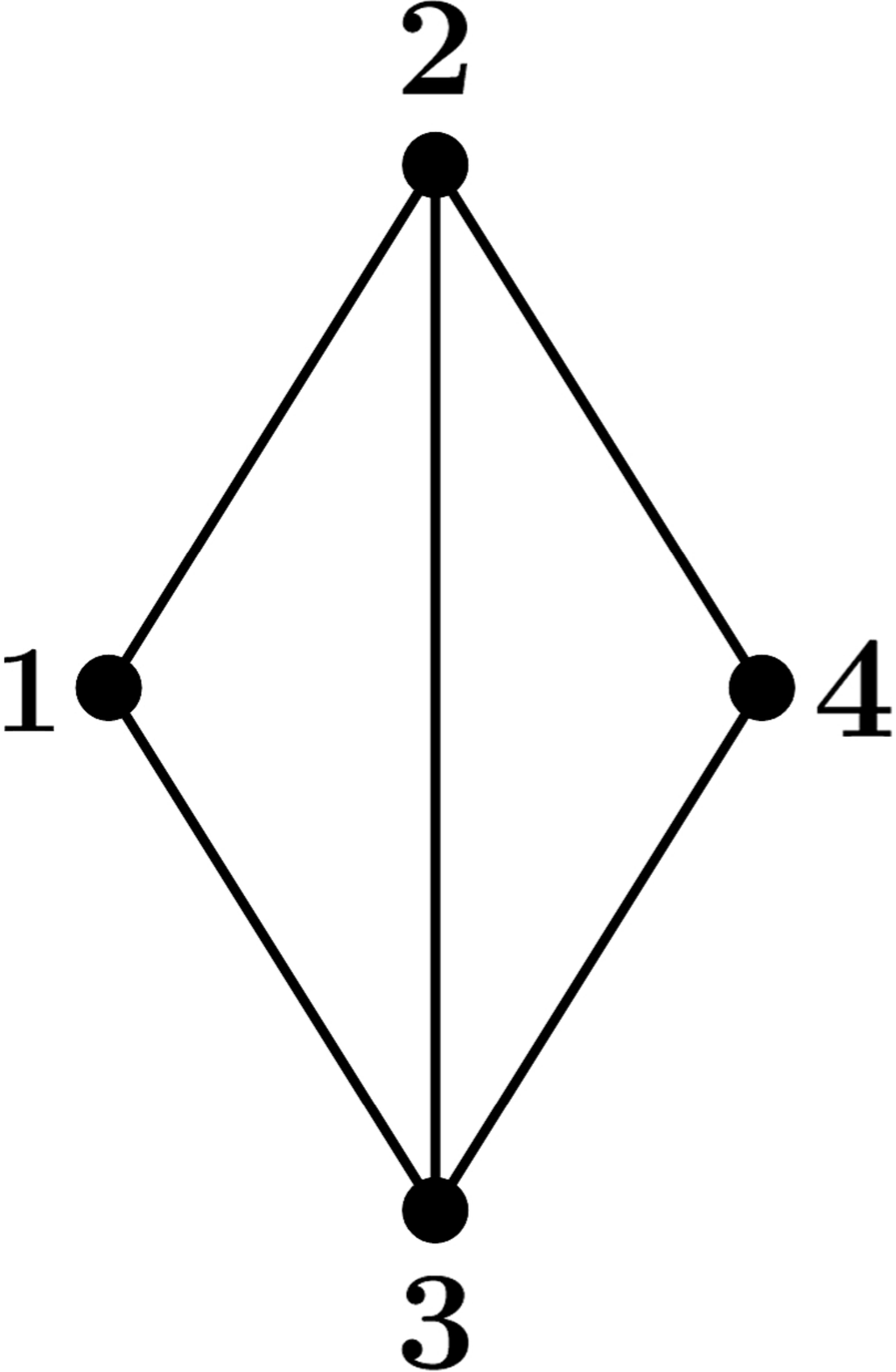}
   \label{fig1:diamond}
 }
 
\label{fig1:graphs}
\caption{Generic Graph Schematics}
\end{figure}

\subsection{Parameter Restrictions}
We now compare the irrepresentable and restricted eigenvalue and smoothness conditions imposed on the model parameters by the different methods.

\subsubsection{Star Graphs}
Consider a star graph $\G(\V,\E)$ with $p$ nodes in Fig~\ref{fig1:star}, where the center node is labeled $1$ and the other nodes are labeled from $2$ to $p$. Following  \cite{RWRY11}, consider the following covariance matrix $\Sigma^*$ parameterized by the correlation parameter $\tau \in [-1,1]$: the diagonal entries are set to $\Sigma^*_{ii} = 1$, for all $i \in V$; the entries corresponding to edges are set to $\Sigma^*_{ij} = \tau$ for $(i,j) \in E$; while the non-edge entries are set as $\Sigma^*_{ij} = \tau^2$ for $(i,j) \notin E$. It is easy to check that $\Sigma^{*}$ induces the desired star graph. With this setup, the irrepresentable condition imposed by the $\ell_1$-regularized Gaussian MLE \cite{RWRY11} entails that $|\tau|(|\tau|+2)<1$ or equivalently $\tau\in(-0.4142,0.4142)$ to guarantee sparsistency. However, our greedy algorithms allow for $\tau \in (-1,1)$ (since $C_{\min}=1-\tau^2$). Under the same setup, the linear neighborhood selection with $\ell_1$-regularization \cite{MeinsBuhl2006} requires $\tau \in (-1,1)$ to guarantee the success.

\subsubsection{Chain Graphs}
Consider a chain (line) graph $\G(\V,\E)$ on $p$ nodes as shown in Fig~\ref{fig1:line}. Again, consider a population covariance matrix $\Sigma^*$ parameterized by the correlation parameter $\tau \in [-1,1]$: set $\Sigma^*_{ij}=\tau^{|i-j|}$. Thus, this matrix assumes a correlation factor of $\tau^k$ between two nodes that are $k$ hops away from each other. It is easy to check that $\Sigma^{*}$ induces the desired chain graph. With this setup, the $\ell_1$-regularized Gaussian MLE \cite{RWRY11} requires $|\tau|^{p-2}\left((p-2)|\tau|+p-1\right)<1$. It is hard to evaluate bounds on $\tau$ in general, but for the case $p=4$ we have $\tau\in(-0.6,0.6)$; for the case $p=10$ we have $\tau\in(-0.75,0.75)$ and for the case $p=100$ we have $\tau\in(-0.95,0.95)$. Our greedy algorithms on the other hand allow for $\tau\in(-1,1)$ (since $C_{\min}=(1-\tau^2)f_p(\tau)$ for some function $f_p(\tau)$ that depends on $p$ and satisfies $f_p(\tau)>C_p$ for all $\tau$ and some constant $C_p$ depending only on $p$). Under the same setup, the linear neighborhood selection with $\ell_1$-regualrization \cite{MeinsBuhl2006} only imposes $\tau\in(-1,1)$, similar to our greedy methods.

\subsubsection{Diamond Graph}
Consider the diamond graph $\G(\V,\E)$ on $4$ nodes with the nodes labeled as in Fig~\ref{fig1:diamond}. Given a correlation parameter $\tau\neq 0$, let $\Sigma^*$ be the population covariance matrix with $\Sigma^*_{ii}=1$ and $\Sigma^*_{ij}=\tau$ except $\Sigma^*_{23}=0$ and $\Sigma^*_{14}=2\tau^2$. It is easy to check that $\Sigma^{*\,-1}$ induces the desired graph. With this setup, the $\ell_1$-regularized Gaussian MLE~\cite{RWRY11} requires $4|\tau|\left(|\tau|+1\right)<1$ or equivalently $\tau\in(-0.2017,0.2017)$. Our greedy algorithm allows for $\tau\in(-0.7071,0.7071)$ (since $C_{\min}=1-2\tau^2$). Under the same setup, the linear neighborhood selection with $\ell_1$-regualrization \cite{MeinsBuhl2006} requires $2|\tau|<1$ or equivalently that $\tau\in(-0.5,0.5)$ to guarantee the success. Unlike the previous two examples, this is a strictly stronger condition than that imposed by our greedy methods.

\section{Experimental Analysis}
\begin{figure}[t]
	\renewcommand{\figurename}{Fig}
	\centering
	\subfigure[Chain (Line Graph)]{\includegraphics[width=0.48\textwidth]{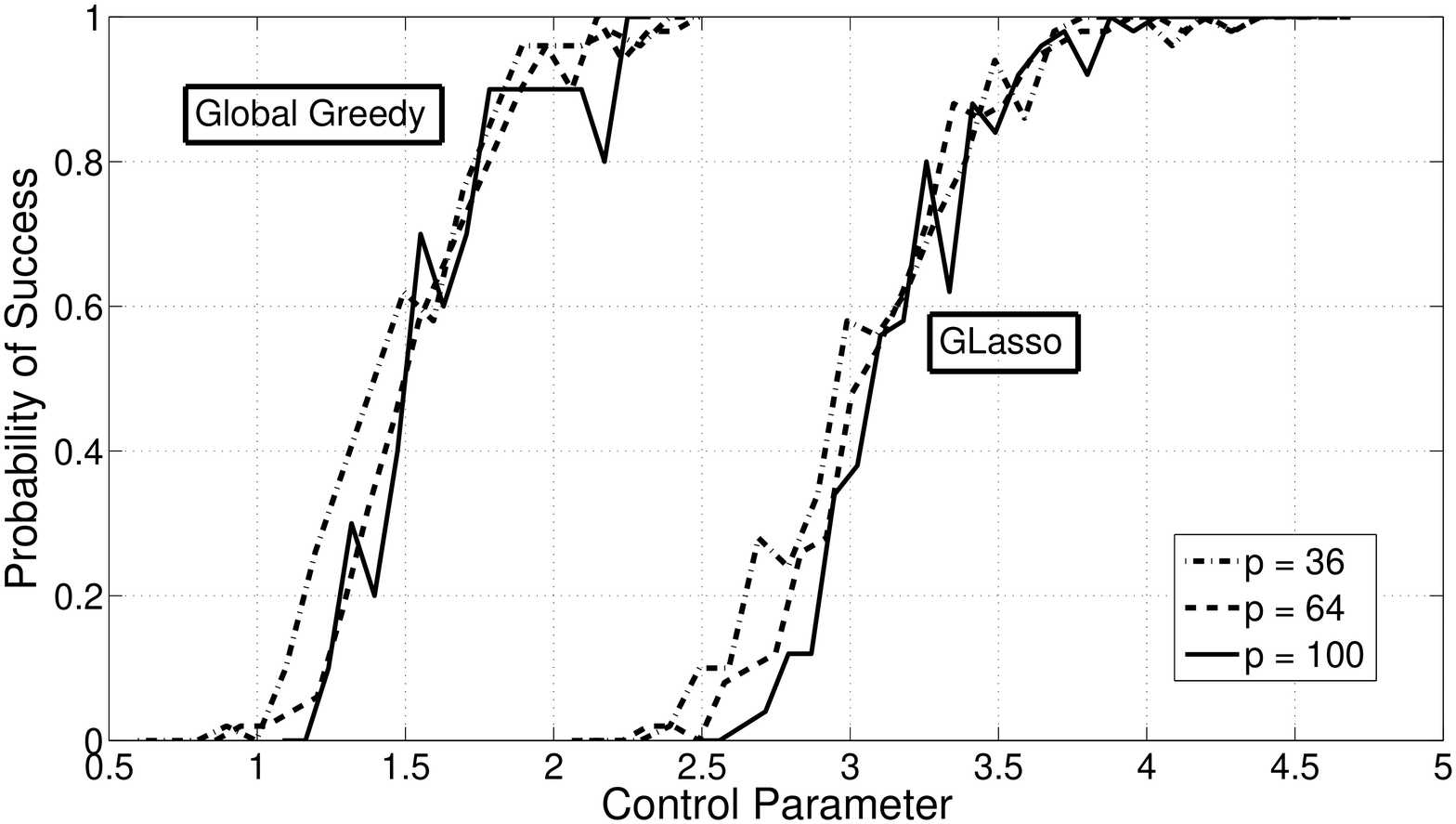}}
	\subfigure[4-Nearest Neighbor (Grid Graph)]{\includegraphics[width=0.48\textwidth]{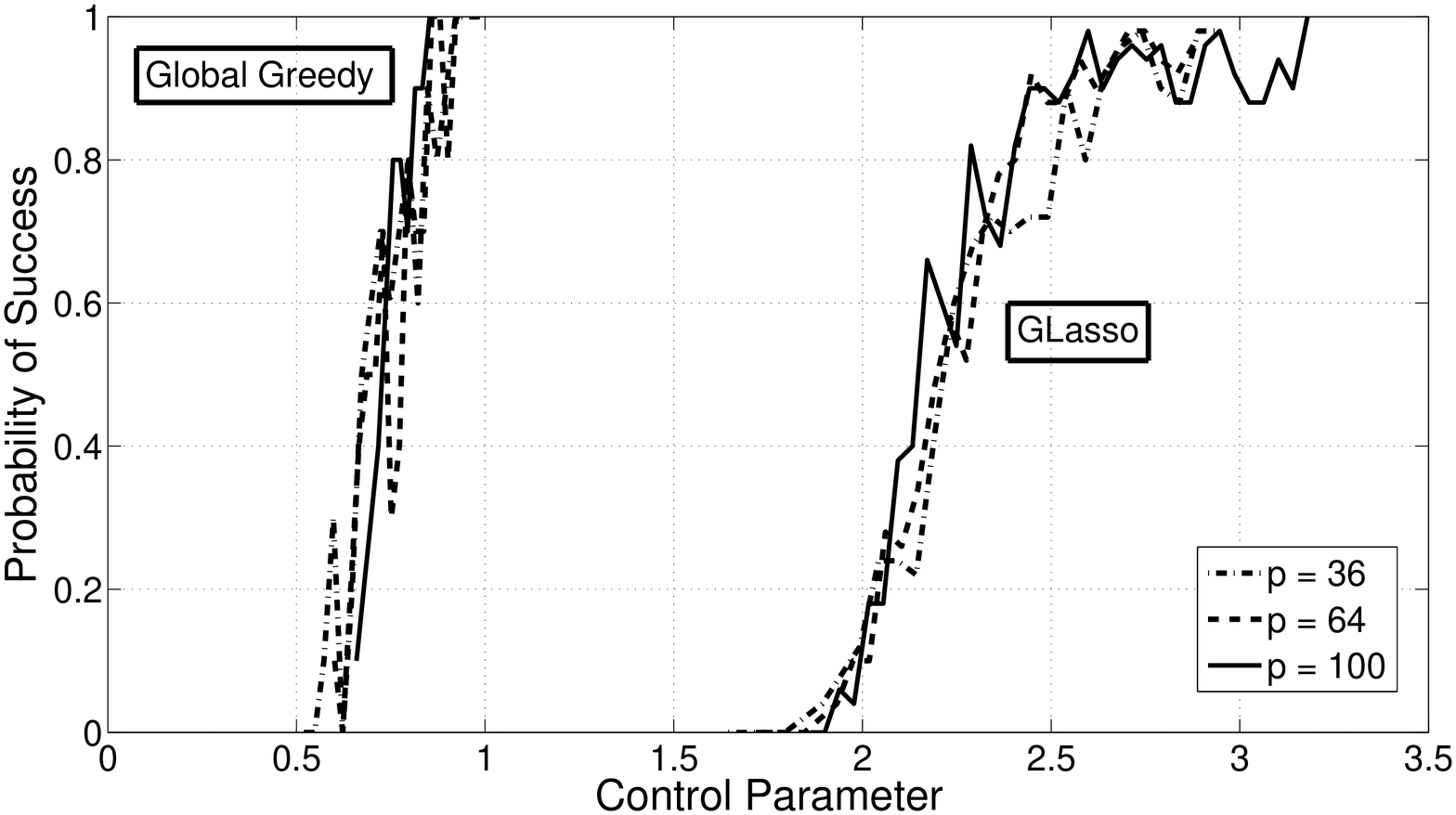}}
	\caption{Plots of success probability $\mathbb{P}[\suppHat = \suppStar]$ versus the control parameter $\beta(n,p,d)=n/[70d\log(p)]$ for (a) chain $(d=2)$ and (b) 4-nearest neighbor grid $(d=4)$ using both Algorithm \ref{Alg:globalgreedyalg} and $\ell_1$-regularized Gaussian MLE (Graphical Lasso).  As our theorem suggests and these figures show, the Global Greedy algorithm requires less samples to recover the exact structure of the graphical model.}
	\label{fig:globalplots}
\end{figure}  

\begin{figure}[t]
	\renewcommand{\figurename}{Fig}
	\centering
	\subfigure[Chain (Line Graph)]{\includegraphics[width=0.48\textwidth]{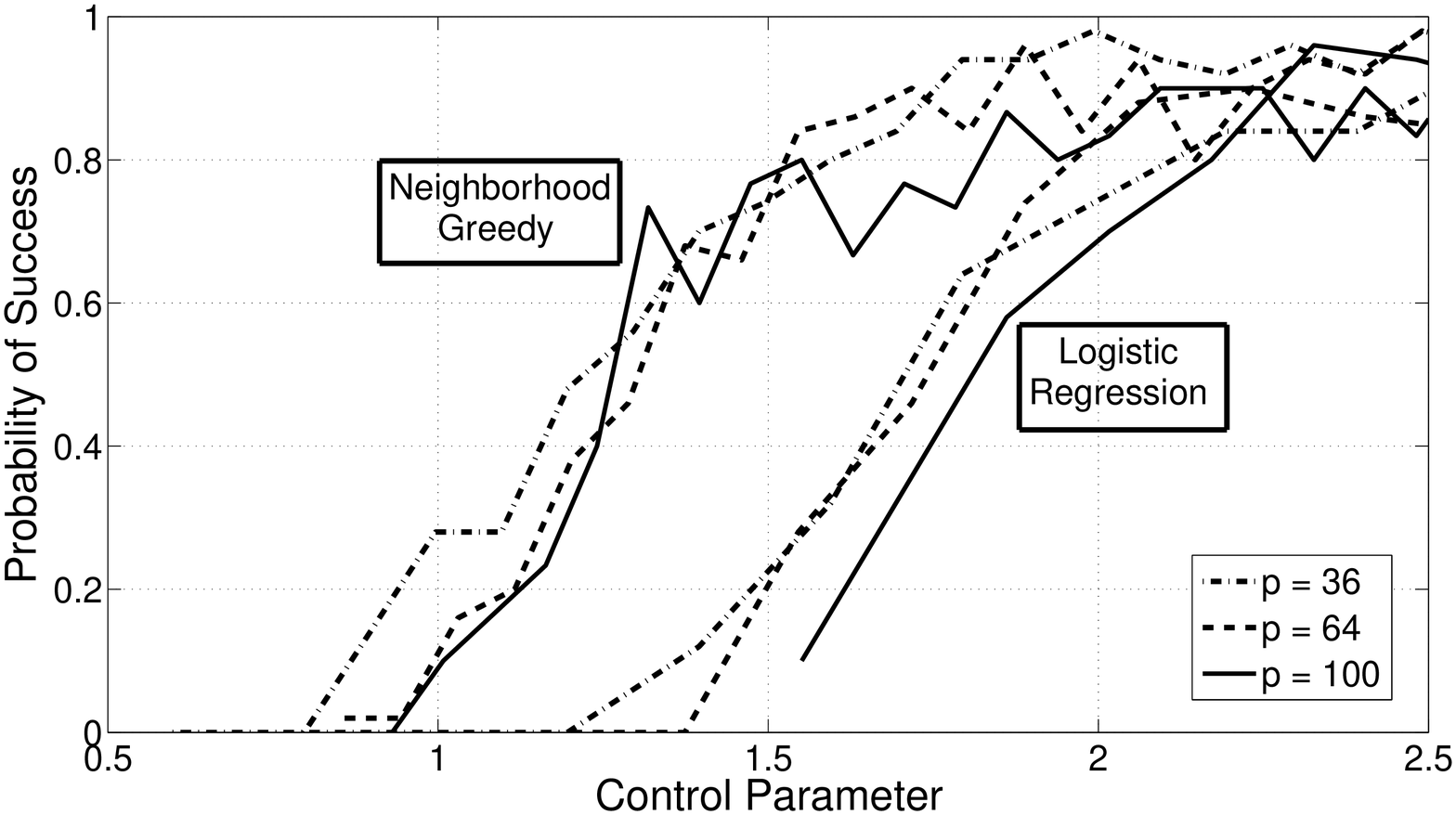}}
	\subfigure[Star Graph]{\includegraphics[width=0.48\textwidth]{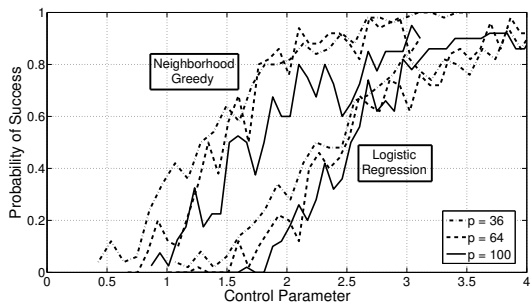}}
	\caption{Plots of success probability $\mathbb{P}[\widehat{\mathcal{N}}_{\pm}(\svert)=\mathcal{N}^*(\svert),\forall\svert\in\vertex]$ versus the control parameter $\beta(n,p,d)=n/[70d\log(p)]$ for (a) chain $(d=2)$ and $\beta(n,p,d)=n/[200\log(dp)]$ for (b) star graph $(d=0.1p)$ using both Algorithm \ref{Alg:nbdgreedyalg} and nodewise $\ell_1$-regularized linear regression (Neighborhood Lasso).  As our theorem suggests and these figures show, the Neighborhood Greedy algorithm requires less samples to recover the exact structure of the graphical model.}
	\label{fig:neighborhoodplots}
\end{figure}

In this section we will outline our experimental results in testing the effectiveness of both Algorithms \ref{Alg:globalgreedyalg} and \ref{Alg:nbdgreedyalg} in a simulated environment.  

\subsection{Optimization Method}
Our greedy algorithm consists of a single variable optimization step where we try to pick the best coordinate. This step can be run in parallel for all single variables to achieve maximum speedup. For greedy neighborhood selection, the single variable optimization is a relatively simple operation, however for the global model selection algorithm (log-det optimization), we would like to provide a fast single variable optimization method to avoid a continual log-det calculation. Following the result in \cite{SchRis}, we have

\vspace{-0.4cm}
\footnotesize\begin{equation}
\begin{aligned}
&\det\left(\widehat{\Theta}^{(k-1)}+\alpha(e_{ij}+e_{ji})\right) =\det\left(\widehat{\Theta}^{(k-1)}\right)\\ &\qquad\qquad\left((1+\alpha(\widehat{\Theta}^{(k-1)})^{-1}_{i,j})^2 - \alpha^2(\widehat{\Theta}^{(k-1)})^{-1}_{ii}(\widehat{\Theta}^{(k-1)})^{-1}_{jj}\right)
\end{aligned}
\nonumber
\end{equation}\normalsize

This entails that 
\footnotesize\begin{equation}
\begin{aligned}
\alpha^* &= \arg\min_{\alpha}\; \tracer{\widehat{\Theta}^{(k-1)}+\alpha(e_{ij}+e_{ji})}{\SigHat^\numobs}\\ &\qquad\qquad\qquad\qquad- \log \det(\widehat{\Theta}^{(k-1)}+\alpha(e_{ij}+e_{ji}))\\
&=\frac{\SigHat^\numobs_{ij}-(\widehat{\Theta}^{(k-1)})^{-1}_{ij}}{(\widehat{\Theta}^{(k-1)})^{-1}_{ii}(\widehat{\Theta}^{(k-1)})^{-1}_{jj} -(\widehat{\Theta}^{(k-1)})^{-1}_{ij}(\widehat{\Theta}^{(k-1)})^{-1}_{ij}}
\end{aligned}
\nonumber
\end{equation}\normalsize
This closed-form solution simplifies the single variable optimization step in our algorithm and avoids continual calculation of $\log\det(\widehat{\Theta})$.

\subsection{Experiments}
To present a formal experimental analysis for both Algorithm \ref{Alg:globalgreedyalg} and Algorithm \ref{Alg:nbdgreedyalg} we simulated zero-mean Gaussian inverse covariance estimation, or GMRF structure learning, for various graph types and scalings of $(n,p,d)$.  For the Global Greedy method we experimented using chain ($d=2$) and grid ($d=4$) graph types with sizes of $p\in\{36,64,100\}$.  For the Neighborhood Greedy method we experimented using chain ($d=2$) and star ($d=0.1p$) graph types with sizes of $p\in\{36,64,100\}$.  Figure 1 outlines the schematic structure for each graph type.  For each algorithm, we measured performance by completely learning the true support set $\suppStar$ pertaining to the non-zero inverse covariates (graph edges).  If $\suppStar$ was completely learned then we called this a \emph{success} and otherwise we called it a \emph{failure}. Using a batch size of $50$ trials for each scaling of $(n,p,d)$ we measured the probability of success as the average success rate.  For both algorithms we used a stopping threshold $\EpsilonStop=\frac{cd\log p}{n}$ where $d$ is the maximum degree of the graph, $p$ is the number of nodes in the graph, $n$ is the number of samples used, and $c$ is a constant tuning parameter, as well as a backwards step threshold of $v=0.5$.  We compared Algorithm \ref{Alg:globalgreedyalg} to that of $\ell_1$-regularized Gaussian MLE (Graphical Lasso) as discussed in \cite{FriedHasTib2007} and \cite{RWRY11} using the \emph{glasso} implementation from Friedman et al.\ \cite{FriedHasTib2007}.  We compared Algorithm \ref{Alg:nbdgreedyalg} to that of neighborhood based $\ell_1$-regularized linear regression (Neighborhood Lasso) using the \emph{glmnet} generalized Lasso implementation, also from Friedman et al.\ \cite{FHT10}.  Both \emph{glasso} and \emph{glmnet} use a regularization parameter $\lambda=c\sqrt{\frac{logp}{n}}$ which was optimally set using $k$-fold cross validation.\\  
\\
Figure \ref{fig:globalplots} plots the probability of successfully learning $\suppStar$ vs the control parameter $\beta(n,p,d)=\frac{n}{70d\log p}$ for varying number of samples $n$ for both Algorithm \ref{Alg:globalgreedyalg} and Graphical Lasso.  Figure \ref{fig:neighborhoodplots} plots the probability of successfully learning $\suppStar$ vs the control parameter $\beta(n,p,d)=\frac{n}{70d\log p}$ for the chain graph type and $\beta(n,p,d)=\frac{n}{200\log (dp)}$ for the star graph type for both Algorithm \ref{Alg:nbdgreedyalg} and neighborhood based $\ell_1$-linear regression.  Both figures illustrate our theoretical results that the Greedy Algorithms require less samples ($O(d\log p)$) than the state of the art Lasso methods ($O(d^2 \log p)$) for complete structure learning.

\bibliographystyle{plainnat}
\bibliography{greedy_gmrf,sml,smlII}

\end{document}